%% file: neurips_2026.tex
\documentclass{article}

\usepackage[preprint]{neurips_2026}

\usepackage[utf8]{inputenc}
\usepackage[T1]{fontenc}
\usepackage{hyperref}
\usepackage{url}
\usepackage{booktabs}
\usepackage{amsfonts}
\usepackage{amsmath}
\usepackage{nicefrac}
\usepackage{microtype}
\usepackage[dvipsnames]{xcolor}
\usepackage{graphicx}
\graphicspath{{assets/figures/}}
\usepackage{enumitem}
\usepackage{pifont}
\usepackage{bm}
\usepackage[dvipsnames,table]{xcolor}
\usepackage[most]{tcolorbox}
\usepackage{titletoc}
\usepackage{amsthm}

\definecolor{boxbg}{HTML}{E1ECEB}
\definecolor{boxframe}{HTML}{205EA6}

\definecolor{flexblue}{HTML}{205EA6}
\definecolor{flexred}{HTML}{AF3029}
\definecolor{flexorange}{HTML}{BC5215}
\definecolor{flexgreen}{HTML}{66800B}

\providecommand{\eqblue}[1]{\textcolor{flexblue}{#1}}
\providecommand{\eqred}[1]{\textcolor{flexred}{#1}}
\providecommand{\eqorange}[1]{\textcolor{flexorange}{#1}}

\definecolor{blue}{rgb}{0,0.2,0.5}
\definecolor{green}{rgb}{0.1,0.35,0.0}
\definecolor{red}{rgb}{0.5,0.0,0.0}
\definecolor{purple}{rgb}{0.4,0,0.6}
\definecolor{cyan}{rgb}{0.0,0.4,0.3}
\definecolor{orange}{rgb}{0.6,0.4,0.0}
\definecolor{gray}{rgb}{0.3,0.3,0.3}

\hypersetup{%
  colorlinks,
  linkcolor=cyan,
  citecolor=blue,
  urlcolor=blue
}

\theoremstyle{plain}
\newtheorem{proposition}{Proposition}[section] 
\newtheorem{observation}{Observation}[section]

\theoremstyle{definition}

\theoremstyle{remark}

\title{Flowing with Confidence}

\author{%
     Friso de Kruiff$^{1,2}$\thanks{Correspondence to \texttt{f.c.dekruiff@uva.nl}.} \quad
     Dario Coscia$^{3,2}$\thanks{Work done in collaboration with the University of Amsterdam.} \quad
     Max Welling$^{1,2}$ \quad
     Erik Bekkers$^{2}$ \\[0.4em]
     $^{1}$CuspAI \quad $^{2}$AMLab, University of Amsterdam \quad $^{3}$mathLab, SISSA
   }

\begin{document}

\maketitle

\begin{abstract}
\input{sections/abstract}
\end{abstract}
\input{sections/introduction}
\input{sections/related_work}
\input{sections/method}
\input{sections/experiments}
\input{sections/conclusion}

\newpage
\input{sections/acknowledgements}
\bibliographystyle{plainnat}
\bibliography{references}


\newpage
\appendix
\input{sections/appendix}

\end{document}

%% file: sections/abstract.tex
Generative models can produce nonsensical text, unrealistic images, and unstable materials faster than simulation or human review can absorb; without per-sample confidence, trust erodes. Existing fixes run $k$ ensembles or stochastic trajectories at $k\times$ compute, measuring variability between models, not model confidence. We propose Flow Matching with Confidence (FMwC). FMwC injects input-dependent multiplicative noise at selected layers, propagates its variance through the network in closed form, and integrates it along the ODE trajectory, yielding a per-sample confidence score at standard sampling cost. The score supports multiple uses: filtering improves image quality and thermodynamic stability of crystals; editing rewinds trajectories to the points where the model commits and redirects them; and adaptive stepping concentrates ODE compute where the flow is ambiguous. We find that the confidence score correlates with the magnitude of the divergence of the learned velocity field, which gives us a window to understand the generative process, opening up surgical forms of guidance that target the moments that matter, new sampling algorithms and interpretability of generative models.

%% file: sections/introduction.tex
\section{Introduction}
\label{sec:introduction}

Generative models have become a central pillar of modern machine learning, producing samples across language, vision, and the natural sciences. Within this class, flow matching has emerged as a leading approach, training a deterministic velocity field by simulation-free regression and integrating it at inference to produce samples \citep{lipman2022flow}. Despite their broad success~\citep{miller2024flowmm, gat2024discrete, qin2024defog, eijkelboom2024variational}, state-of-the-art models still generate outputs that contain artefacts, and each sample arrives with no native indication of how reliable it is: a hallucinated citation, an anatomically broken figure, and a thermodynamically impossible crystal each emerge with the same implicit confidence. The cost of this missing signal compounds with deployment: in materials discovery, for instance, validating a single generated candidate can take days of compute or weeks of laboratory time, and indiscriminate generation wastes both. This motivates the central question of this work:
\begin{quote}
    \centering
    \emph{Can flow-matching models learn to know what they don't know?}
\end{quote}

\begin{figure}[t]
\centering
\includegraphics[width=\textwidth]{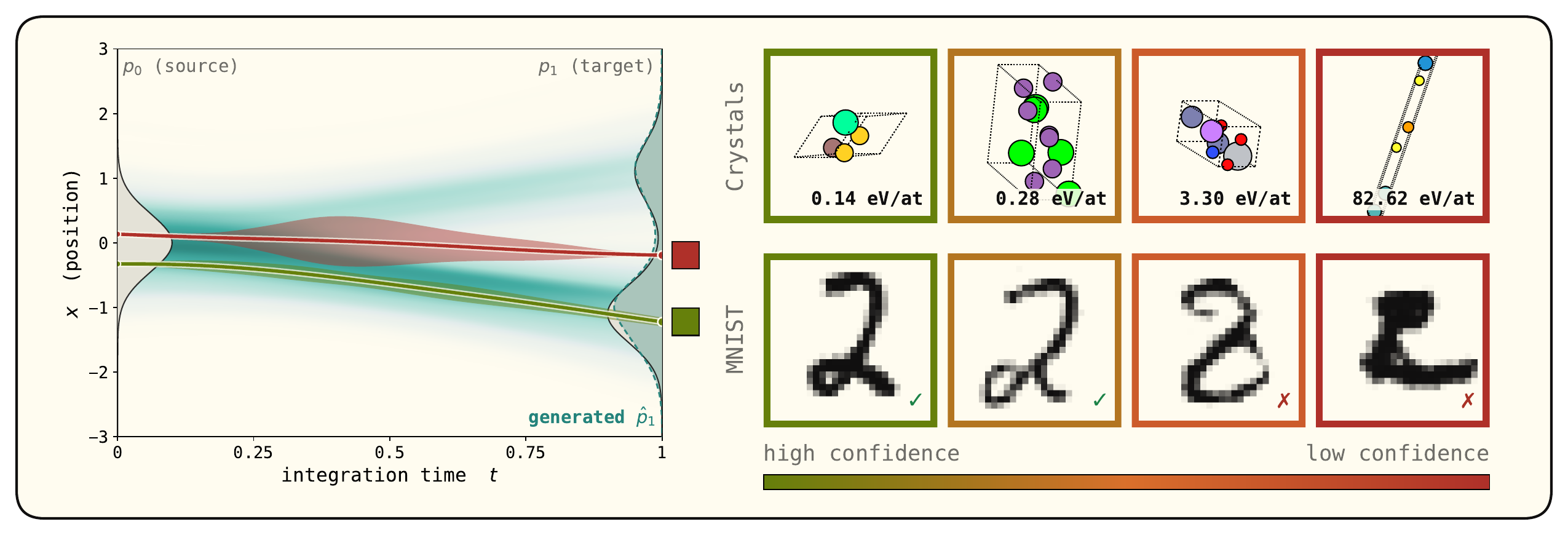}
\caption{\textbf{Flow-Matching model with Confidence.}
  \textbf{(Left)}~Variance is propagated alongside the sample as it is generated along the path $p_0 \!\to\! p_1$: well-placed samples (\textbf{\textcolor[HTML]{66800B}{green}}) contract into a tight $\pm\sigma$ band, while misplaced ones (\textbf{\textcolor[HTML]{AF3029}{red}}) stay diffuse.
  \textbf{(Right)}~Sorting by the resulting confidence score recovers quality across modalities---low-energy crystals and clean digits at high confidence, implausible structures and malformed digits at low confidence.}
\label{fig:method_overview}
\end{figure}
This question echoes \citet{nalisnick2019deep}, who showed that the log-likelihoods of deep generative models cannot reliably distinguish in- from out-of-distribution inputs. We ask an orthogonal question on the generation side: not whether the model can flag anomalous inputs from its likelihood, but whether it can produce a per-sample reliability signal for the outputs it itself generates, a signal that does not require access to a likelihood at all.

Confidence estimation~\citep{berry2023efficient, jazbec2025generative} offers a route to such a signal, yet existing approaches force a costly trade-off between training and inference budgets. Deep ensembles \citep{lakshminarayanan2017simple} train $k$ models; MC-Dropout \citep{gal2016dropout}, SDE-style perturbation, and post-hoc last-layer Laplace methods \citep{jazbec2025generative} instead run $k$ trajectories per output at inference. Beyond their cost, all of these report \emph{disagreement between models or between runs} rather than a property of the single model that produced the output, leaving per-sample confidence either expensive, or unavailable (Table~\ref{tab:overview}).

We propose \textbf{Flow Matching with Confidence} (FMwC), a flow-matching model that produces a calibrated per-sample confidence score for every generated output, at the same training and inference cost as the deterministic baseline. The construction builds on variational dropout \citep{kingma2015variational} with input-dependent rates \citep{coscia2025barnn}, and departs from prior variational-dropout work by propagating variance through the entire ODE trajectory rather than through the network at a single time step; that is what turns per-layer learned noise into a trajectory-level confidence score. The resulting score is a within-model, geometric signal that reflects how the learned velocity field responds to perturbations of its own hidden states, distinct from the between-model disagreement that ensembles and MC-Dropout measure. The main contributions of this paper are:
\begin{enumerate}[leftmargin=*,topsep=2pt,itemsep=2pt]
\item \textbf{A single-pass confidence estimator for flow matching.} We introduce Flow Matching with Confidence (FMwC), a flow-matching model that emits a per-sample, trajectory-level confidence score alongside every generated sample, from a single deterministic forward integration as the standard flow-matching baseline. The score is obtained by analytically propagating learned, input-dependent layer-wise noise through the ODE, yielding a closed-form trajectory variance with no Monte Carlo sampling and no auxiliary networks.

\item \textbf{Confidence enables practical, training-free improvements at inference.} Filtering by FMwC monotonically improves sample quality over the deterministic baseline. Second, the temporal variance peak identifies the exact moment the model commits to a sample enabling targeted, constraint-respecting edits without retraining, while confidence score adaptive integration recovers quality at low step budgets where uniform stepping degrades sharply. The same signal supports zero-shot quality selection in confidence-conditioned generation, and serves as a single-pass proxy for flow diagnostics that would otherwise require multiple integrations. 

\item \textbf{Confidence reflects the intrinsic geometry of the learned flow.} We show that FMwC's single-pass score tracks the local divergence of the learned velocity field. This establishes the confidence score as a within-model geometric quantity, a property of the single trained flow rather than a proxy for inter-model disagreement.
\end{enumerate}

%% file: sections/related_work.tex
\section{Related Work}
\label{sec:related_work}

\begin{table}[t]
\centering
\caption{Comparison of uncertainty quantification approaches for flow matching. FMwC is the only method that provides per-sample confidence at the cost of a single deterministic forward pass, matching the inference budget of standard flow matching.}
\label{tab:overview}
\begin{tabular}{lccc}
\toprule
Method & Trained models & Sampled trajectories & Confidence \\
\midrule
FM                   & $1$ & $1$ & \textcolor{red}{\ding{55}} \\
MC-Dropout           & $1$ & $k$ & \textcolor{ForestGreen}{\ding{51}} \\
Ensemble             & $k$ & $k$ & \textcolor{ForestGreen}{\ding{51}} \\
\textbf{FMwC (ours)} & $1$ & $1$ & \textcolor{ForestGreen}{\ding{51}} \\
\bottomrule
\end{tabular}
\end{table}

\paragraph{Flow matching has a unique target with intrinsically delicate geometry.}
Flow matching trains a velocity field by simulation-free regression against a conditional target \citep{lipman2022flow}, and has been extended to general geometries \citep{chen2024flow} and graph-structured data \citep{eijkelboom2024variational}. Because the conditional and marginal flow-matching gradients coincide, the optimal solution is uniquely determined \citep{lipman2022flow}. It is a conditional expectation: at each intermediate state, the velocity averages over the conditional velocities of all endpoints reaching that state \citep{pooladian2023multisample}. This averaging structure renders the optimal field geometrically delicate: it undergoes spontaneous symmetry-breaking transitions at critical times, where the trajectory's posterior over endpoints contracts onto a single mode and the susceptibility of the velocity field diverges \citep{raya2023spontaneous, ambrogioni2024thermodynamics}, with the local divergence of the field further governing how information concentrates along the trajectory \citep{stancevic2026information}. A finite-capacity network trained on finite data approximates such a field imperfectly, with the approximation expected to be least reliable in regions of intrinsic instability. The standard flow-matching architecture, however, produces a deterministic velocity at inference and exposes no per-sample signal of where the learned flow is trustworthy.

\paragraph{Per-sample reliability methods at inference are expensive or measure the wrong quantity.}
Likelihood is not a per-sample reliability signal for deep generative models \citep{nalisnick2019deep, kamkari2024geometric}, motivating methods that produce one explicitly. Deep ensembles \citep{lakshminarayanan2017simple} train $k$ independent networks; at inference, each generated sample requires $k$ separate trajectory integrations, scaling the per-sample cost by $k$ at every Euler step. MC-Dropout \citep{gal2016dropout} reuses a single trained model but still requires $k$ stochastic trajectories per output. The post-hoc last-layer Laplace family \citep{jazbec2025generative} draws $k$ weight realisations and runs $k$ full sampling trajectories per output. Each of these methods reports disagreement between models, between runs, or between externally embedded outputs, and demanding significantly more compute than standard flow matching (Table~\ref{tab:overview}). A reliability signal at the inference cost of standard flow matching has not been demonstrated yet.

\paragraph{Toward efficient uncertainty estimation.}
A separate line of work focuses on efficient uncertainty estimation. Variational Adaptive Dropout (VAD) learns a small auxiliary network to predict the stochastic perturbation scale for each network' weights, so the posterior broadens where the input is ambiguous and tightens where it is not \citep{coscia2025barnn, coscia2025blips}. Crucially, under VAD the predictive distribution at each linear layer is Gaussian and available in closed form, but stacking layers with nonlinearities breaks this closed form, needing multiple trajectory samples to obtain moments estimates. Sampling-free moment propagation closes exactly this gap: it pushes moments through pointwise nonlinearities analytically, by local linearisation \citep{wang2013fast} or by quadrature \citep{postels2019sampling}, recovering a closed-form predictive distribution across the full depth of the network in a single pass. Combining input-dependent variational posteriors with closed-form moment propagation through every layer has not, to our knowledge, been carried into a trajectory-valued generative ODE.

%% file: sections/method.tex
\section{Method}
\label{sec:method}

FMwC is a single-pass approach for equipping any flow-matching model with a per-sample confidence signal. It treats the network's weights stochastically by injecting input-dependent Gaussian noise, and propagates the resulting variance analytically through the ODE alongside the deterministic trajectory. In the rest of this section, we formalise FMwC as a variational posterior over velocity fields, derive the variance recursion through the network and the integrator, and define scalar readouts that summarise the trajectory variance and can be used as confidence scores.

\subsection{Flow Matching with Confidence}
\label{sec:method-fmwc}

\paragraph{Flow matching.}
Flow matching~\citep{lipman2022flow} learns a continuous-time transformation from a base distribution $p_0$ to a target density $p_1 := p_{\mathrm{data}}$ by training a time-dependent velocity field $v_{t,\bm{\theta}}(\bm{x}_t)$ that transports $p_0$ to $p_1$ along a prescribed probability path, where $\bm{\theta}$ are the trainable network parameters. At inference, samples are drawn by integrating the learned field from $t{=}0$ to $t{=}1$. Standard flow matching is silent about the reliability of each step: the network outputs a single value $\mu_t := v_{t,\bm{\theta}}(\bm{x}_t)$, and the integrator advances on it without further information. The remainder of this section constructs the missing object: a per-step variance $\sigma_t^2$ that quantifies how trustworthy each velocity estimate is. We refer the reader to Appendix~\ref{appendix:flow_matching} for a self-contained review of flow matching, including the conditional formulation and training objective.

\paragraph{From point estimate to posterior.}
Even though the conditional flow-matching target is uniquely determined, the field it specifies is geometrically delicate (Sec.~\ref{sec:related_work}): symmetry-breaking transitions and finite-data fitting noise make the velocity locally unreliable in characteristic regions, and a single point estimate carries no signal of where it is trustworthy. We therefore treat the network weights $\bm{\omega}$ as random and learn an approximate posterior $q_{\bm{\psi}}(\bm{\omega})$ by maximising the weight-space ELBO
\begin{equation}
\mathrm{ELBO}(\bm{\psi}) = \mathbb{E}_{\bm{\omega} \sim q_{\bm{\psi}}}\!\bigl[\log \mathcal{L}(\mathcal{D} \mid \bm{\omega})\bigr] - \mathrm{KL}\!\bigl[q_{\bm{\psi}}(\bm{\omega}) \,\|\, \pi(\bm{\omega})\bigr],
\label{eq:elbo-param}
\end{equation}
with flow-matching log-likelihood
\begin{equation}
\log \mathcal{L}(\mathcal{D} \mid \bm{\omega}) = -\,\mathbb{E}_{t \sim \mathcal{U}(0,1),\,\bm{x}_t \sim p_t}\!\bigl[\|u_t(\bm{x}_t) - v_{t,\bm{\omega}}(\bm{x}_t)\|^2\bigr].
\label{eq:cfm-likelihood}
\end{equation}
Here $\bm{\psi}$ denotes all variational parameters. Eq.~\eqref{eq:elbo-param} is obtained by inducing $q_{\bm{\psi}}(v_t)$ as the pushforward of $q_{\bm{\psi}}(\bm{\omega})$ through the network parameterization $\bm{\omega} \mapsto v_{t,\bm{\omega}}$, with a matching pushforward prior; we work directly in weight space and defer the formal functional derivation to App.~\ref{sec:vi_vector_fields}.

This formulation gives us two useful properties. First, the simulation-free conditional flow-matching estimator of \citet{lipman2022flow} extends to the variational setting: the gradient of the expected log-likelihood under $q_{\bm{\psi}}$ coincides with the gradient of the standard CFM regression loss evaluated on weight samples (Obs.~\ref{obs:equivalence_grads}), so we inherit a tractable, simulation-free training objective at no extra cost. Second, the posterior-mean velocity field $\mu_t(\bm{x}) := \mathbb{E}_{\bm{\omega}\sim q_{\bm{\psi}}}[v_{t,\bm{\omega}}(\bm{x})]$ still satisfies the continuity equation (Obs.~\ref{prop:bayesian_model_average_vectorfields}), so Bayesian model averaging produces a valid generative flow rather than a heuristic combination of vector fields. Together, these properties mean the variational treatment preserves both the training procedure (up to a KL regularizer) and the generative semantics of standard flow matching, while equipping the model with a posterior over velocities from which a per-sample confidence signal can be derived.

\paragraph{State-dependent posterior via VAD.}
The construction so far leaves the variational family $q_{\bm{\psi}}$ unspecified. A standard mean-field Gaussian would suffice for tractability, but it would assign the same posterior spread everywhere in the trajectory and, by Obs.~\ref{prop:bayesian_model_average_vectorfields}, average over a fixed set of admissible vector fields regardless of where the flow is locally reliable. The pathology we are trying to detect, however, is fundamentally local: critical times where the trajectory's posterior over endpoints contracts onto a single mode are exactly where the velocity becomes most sensitive (Sec.~\ref{sec:related_work}), and these regions are not known in advance. We therefore want a posterior whose spread is itself a function of the state $\bm{x}$ and time $t$ along the trajectory.

We parameterise $q_{\bm{\psi}}$ using variational adaptive dropout (VAD) \citep{coscia2025barnn}, which makes the per-weight perturbation depend on the input. Each weight $\bm{\omega}_l$ in layer $l$ has the Gaussian posterior
\begin{equation}
q_{\bm{\psi}}(\bm{\omega}_l \mid \bm{x},t) = \mathcal{N}\!\bigl(\bm{\theta}_l,\; \alpha_l(\bm{x},t)^2\,\bm{\theta}_l^2\bigr),
\label{eq:vad}
\end{equation}
and sampled via the local-reparametrization trick, see App.~\ref{appendix:vad_intro} for more details on VAD. The dropout scale $\alpha_l(\bm{x},t)$ is the output of a small per-layer inference network $E_{\bm{\gamma}}$; the trainable parameters of FMwC are therefore $\bm{\psi} = \{\bm{\theta}, \bm{\gamma}\}$, with $\bm{\theta}$ the main model weights and $\gamma$ the inference network weights. The role of the inference network is to broaden the posterior at points of geometric ambiguity and tighten it elsewhere, learned without supervision: the only training signal it receives is the ELBO itself, in which the KL term reduces to a closed-form expression in $\alpha_l$ alone (Eq.~\ref{eqn:kl_analytical}) and acts as a regulariser on the dropout rates beyond the standard flow-matching loss.

\subsection{Variance Propagation}
\label{sec:method-propagation}
VAD as introduced in \S\ref{sec:method-fmwc} gives a Gaussian posterior over weights at each linear layer; under the local reparameterisation \citep{kingma2015variational}, this induces a Gaussian pre-activation in closed form. Stacking layers with nonlinearities generally breaks this closed form. Sampling-free moment propagation closes this gap within a single forward pass \citep{wang2013fast, postels2019sampling}; FMwC carries this further by propagating moments through both the network \emph{and} the ODE trajectory in one integration: at every ODE step, we compute the posterior-mean velocity $\mu_t$ together with its variance $\sigma_t^2$.

\paragraph{Through the network.}
Under the local reparameterisation, the pre-activation at each linear layer $l$ is Gaussian with closed-form mean and variance, so the pair\footnote{We drop the time index on internal layer moments for clarity, writing $(\mu_l, \sigma_l^2)$ for the moments at layer $l$ at fixed time $t$. The subscript $t$ is reserved for the network's output, i.e.\ $(\mu_t, \sigma_t^2)$ are the moments at the final layer.} $(\mu_l, \sigma_l^2)$ propagates layer-to-layer without sampling \citep{molchanov2017variational}:
\begin{equation}
\mu_{l+1} = \bm{\theta}_l\,\eqblue{\mu_{l}} + b_l,
\qquad
\eqred{\sigma_{l+1}^2} = \eqorange{\alpha_l}\bigl(\bm{\theta}_l^{\odot 2} \eqblue{\mu_{l}^{\odot 2}}\bigr) + \bigl(\eqorange{\alpha_l} + 1\bigr)\bigl(\bm{\theta}_l^{\odot 2}\eqred{\sigma_{l}^2}\bigr),
\label{eq:layerprop}
\end{equation}
where $(\mu_l, \sigma_l^2)$ denotes the post-activation moments entering layer $l$ and $(\mu_{l+1}, \sigma_{l+1}^2)$ the pre-activation moments leaving it; $\bm{\theta}_l^{\odot 2}$ denotes elementwise square, and {\eqblue{blue}} = mean, {\eqred{red}} = variance, {\eqorange{orange}} = learned noise scale. The recursion implies that variance grows multiplicatively with depth, making multi-layer VAD propagate uncertainty through the network. Finally, pointwise nonlinearities $\phi$ require computing $\mathbb{E}[\phi(z)]$ and $\mathrm{Var}[\phi(z)]$ for $z \sim \mathcal{N}(\mu_l, \sigma_l^2)$; we use $10$-node Gauss--Hermite quadrature \citep{postels2019sampling}, with first- and second-order Taylor linearisation \citep{wang2013fast} as a near-identical alternative at the depths used here (Tab.~\ref{tab:propagation}).

\paragraph{Through the ODE.}
At each Euler step the network emits the pair $(\mu_t, \sigma_t^2)$, where $\mu_t$ is the posterior-mean velocity defined above. Advancing the position along $\bm{x}_{t+\Delta t} = \bm{x}_t + \mu_t\,\Delta t$ therefore integrates the Bayesian model average flow, which Obs.~\ref{prop:bayesian_model_average_vectorfields} guarantees is itself a valid generator. The variance $\sigma_t^2$ is recorded at every step. This is the design choice that distinguishes FMwC from MC-Dropout and Laplace methods: rather than sampling $k$ velocity fields and integrating $k$ trajectories, we integrate the posterior-mean trajectory and track the variance analytically along it, at approximately the same cost as deterministic flow matching. The output of a single forward integration is therefore not a scalar uncertainty but a \emph{trajectory} of velocity variance with non-trivial temporal structure (Fig.~\ref{fig:method_overview}, \emph{left}): variance can fall as the flow commits to a mode, or persist as the flow fails to.

\subsection{Reading the Trajectory}
\label{sec:method-readouts}

For filtering, the variance trajectory $\{\sigma_t^2\}_{t=0}^{T}$ (the full per-step sequence emitted along the integration) must be aggregated into a per-sample scalar that ranks samples for retention; other downstream uses can read the trajectory directly (Sec.~\ref{sec:experiments}). Three label-free aggregations come directly from the trajectory's geometry. \emph{Endpoint confidence} reads off $\sigma_T^2$ at the final step,
\begin{equation}
c_{\mathrm{ep}}(\bm{x}_0) \;=\; \bigl\| \sigma_{T} \bigr\|_2 ;
\label{eq:cep}
\end{equation}
\emph{integrated confidence} sums per-step velocity std along the trajectory, $c_{\mathrm{int}}(\bm{x}_0) = \sum_{t}\bigl\|\sqrt{\sigma_t^2}\bigr\|_2\,\Delta t$; and the \emph{temporal confidence ratio} contrasts late- with early-window variance,
\begin{equation}
c_{\mathrm{tc}}(\bm{x}_0) \;=\;
\frac{\sum_{t\,>\,\tau_\ell}\bigl\|\sqrt{\sigma_t^2}\bigr\|_2}
     {\sum_{t\,<\,\tau_e}\bigl\|\sqrt{\sigma_t^2}\bigr\|_2},
\label{eq:ctc}
\end{equation}
which lifts when converging samples shrink late variance while stuck samples do not. The full sequence of variance along the trajectory is also a useful signal to link to specific downstream outcomes when labels are available: given a small held-out set with an outcome flag (a downstream-quality threshold, a classifier-failure label, a stability target), we can map the sequence to that outcome directly via a learned head $c_\phi(\bm{x}_0) = \phi_{\bm{\eta}}\!\bigl(\sigma_0^2,\dots,\sigma_T^2\bigr)$, fit with L1-regularised logistic regression or gradient boosting; this turns $\{\sigma_t^2\}$ into a task-specific score at little extra inference cost. Cross-method baselines re-cast the same confidence by resampling instead: MC-dropout and deep ensembles draw $k$ trajectories from $\bm{x}_0$ and report the endpoint dispersion $c_{\mathrm{disp}} = \bigl\|\mathrm{std}_k\!\bigl(\bm{x}_1^{(i)}\bigr)\bigr\|_2$, at $k{\times}$ trajectory cost.

\subsection{Where and When Does Uncertainty Arise?}\label{sec:method-intuition}
The construction above is mechanically motivated, but the reason the resulting $\sigma_t^2$ is informative is geometric. The optimal flow-matching velocity is a conditional expectation over endpoints (Sec.~\ref{sec:related_work}), and at critical times this average undergoes spontaneous symmetry-breaking: the endpoint posterior contracts onto a single mode, the field's susceptibility diverges, and the local divergence of the velocity governs how information concentrates \citep{raya2023spontaneous,ambrogioni2024thermodynamics,stancevic2026information}. At these bifurcation points the regression target is an average over multiple still-plausible continuations, so any single-valued prediction is irreducibly uncertain — and this is the regime where finite-capacity ELBO training also has the largest residual error to explain.

The inference network picks up on this without supervision. Maximising the ELBO trades reconstruction error against KL: tighter posteriors (smaller $\alpha_l$) reduce the KL but inflate reconstruction, and the optimum broadens $\alpha_l(\bm{x},t)$ exactly where the velocity is hardest to fit, i.e.\ the bifurcation regions. Eq.~\ref{eq:layerprop} then amplifies this broadening multiplicatively with depth, leaving bifurcation-point inputs with large $\sigma_t^2$. Figure~\ref{fig:method-div-vs-std} shows this on the Checkerboard distribution, where $|\nabla\!\cdot\!v_{t,\bm{\theta}}|$ is available analytically: the single-pass $\sigma_t$ reproduces its structure at the cost of a mere deterministic forward pass. Quantitative correspondence across modalities is reported in Sec.~\ref{sec:exp-mech}.
\begin{figure}[tbb]
  \centering
  \includegraphics[width=\textwidth]{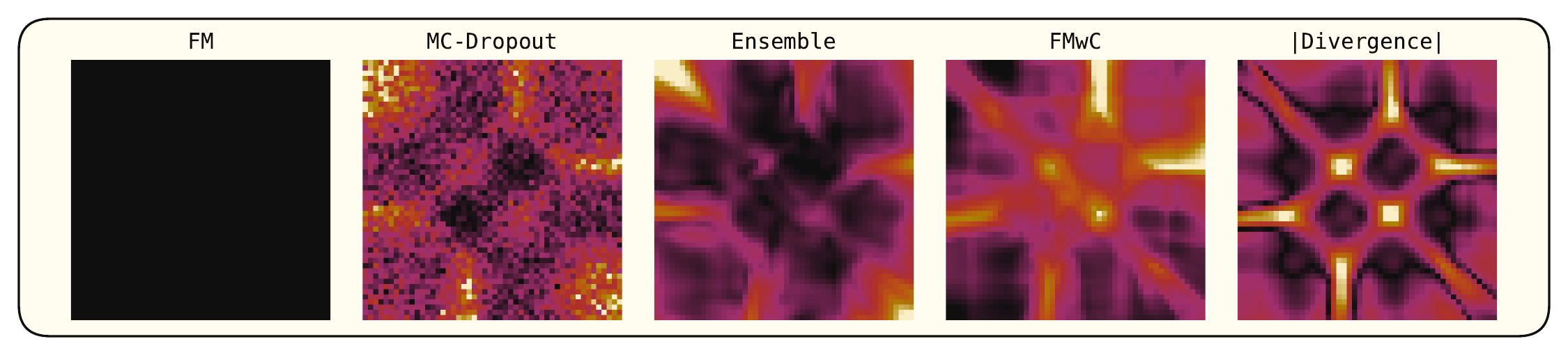}
  \caption{\textbf{$\sigma_t^2$ recovers the divergence structure of the learned velocity field at bifurcation time.} Spatial fields at $t{=}0.6$ on the Checkerboard, where $|\nabla\!\cdot\!v_{t,\bm{\theta}}|$ (right) is available analytically. Quantitative correlation across modalities in Sec.~\ref{sec:exp-mech}.}
  \label{fig:method-div-vs-std}
\end{figure}

%% file: sections/experiments.tex
\section{Experiments}
\label{sec:experiments}

We demonstrate the effectiveness of FMwC by applying it across domains where generative modelling plays a central role. First, as illustrative examples, we apply FMwC to a 2D Checkerboard and to class-conditional MNIST generation under standard Euclidean flow matching. We then turn to a more challenging setting and apply FMwC to de novo inorganic crystal generation under Riemannian flow matching, where it must operate on a non-Euclidean manifold. We first verify that standard generation under FMwC preserves sample quality, then exercise the confidence score in three downstream uses: filtering low-confidence samples, editing trajectories at the timesteps where they lock onto a mode, and adapting ODE step size to where the flow is ambiguous. Finally, we show that $\sigma_t^2$ indeed correlates with the magnitude of the learned velocity field's divergence, with the correlation strengthening as the geometry of the flow becomes richer. For additional details on datasets, model hyperparameters and metrics we refer to App.~\ref{app:exp_details}.

\begin{table}[h!]
\centering
\caption{\textbf{Standard generation under FMwC matches or improves FM on every modality, with negligible inference overhead at scale.} Quality columns: one headline metric per modality (lower is better in all three). Cost columns: forward-pass FLOPs ratio relative to FM, measured. Bold marks the best per column.}
\label{tab:quality-summary}
\resizebox{\textwidth}{!}{\input{assets/tables/table_quality_summary}}
\end{table}

\paragraph{Standard generation under FMwC preserves and improves sample quality.}
Table~\ref{tab:quality-summary} reports sample quality across the three datasets. FMwC matches or improves FM's quality metric on every dataset, and notably improves over FM on Checkerboard and class-conditional MNIST, showing that learning the confidence score can also improve sample quality. Inference cost scales favourably with backbone size. VAD adds a roughly fixed cost per layer, so on the tiny Checkerboard backbone this overhead is significant, but on larger backbones like MNIST and FlowMM the backbone itself dominates and the overhead becomes negligible. Finally, multi-trajectory baselines are absent on Crystals because deep ensembles would require $5{\times}$ retraining of FlowMM and MC-dropout would require $5{\times}$ stochastic trajectories per generated structure, both prohibitive at the throughput LeMat GenBench~\citep{betala2025lemat} evaluates against.

\begin{table}[h!]
\centering
\caption{\textbf{FMwC trajectories carry filtering signal across modalities at single-trajectory cost.} AUPRC ($\uparrow$) of confidence-guided retention on Checkerboard misplacement, MNIST classifier failure, and Crystal $e_{\text{above hull}}<0.1$ eV/atom. The FM row reports the positive-class rate (random-retention floor); meaningful comparison is lift over it. Row tints denote scoring family: \textcolor[HTML]{205EA6}{blue} = unsupervised aggregations of the FMwC variance trajectory, \textcolor[HTML]{66800B}{green} = supervised heads on engineered features.}
\label{tab:filter-auprc}
\resizebox{\textwidth}{!}{\input{assets/tables/table_5_filter_auprc}}
\end{table}

\paragraph{Filtering low-confidence samples reaches ensemble-grade lift at single-trajectory cost.}
Table~\ref{tab:filter-auprc} reports filtering AUPRC on the three modalities. The FM row is the random-retention floor (it equals the positive-class rate); what matters is lift above that floor. Two findings stand out. First, on Checkerboard the temporal aggregation of $\sigma_t^2$ reaches $0.55$ AUPRC from a single trajectory, matching the $k{=}5$ MC-Dropout and Ensemble baselines at one fifth of the inference cost. Second, a small supervised head trained on the variance trajectory is the best single-trajectory filter on every modality, $0.83$ on Checkerboard, $0.09$ on MNIST, $0.06$ on Crystals, still using a single trajectory. The MNIST and Crystals numbers are modest in absolute terms because the targets are hard (a classifier disagreeing with the conditioned digit; an unrelaxed UMA single-point predicting $e_{\text{above hull}}<0.1$ eV/atom), but in every case the variance trajectory carries information the FM baseline does not.

\begin{figure}[h!]
    \centering
    \includegraphics[width=\textwidth]{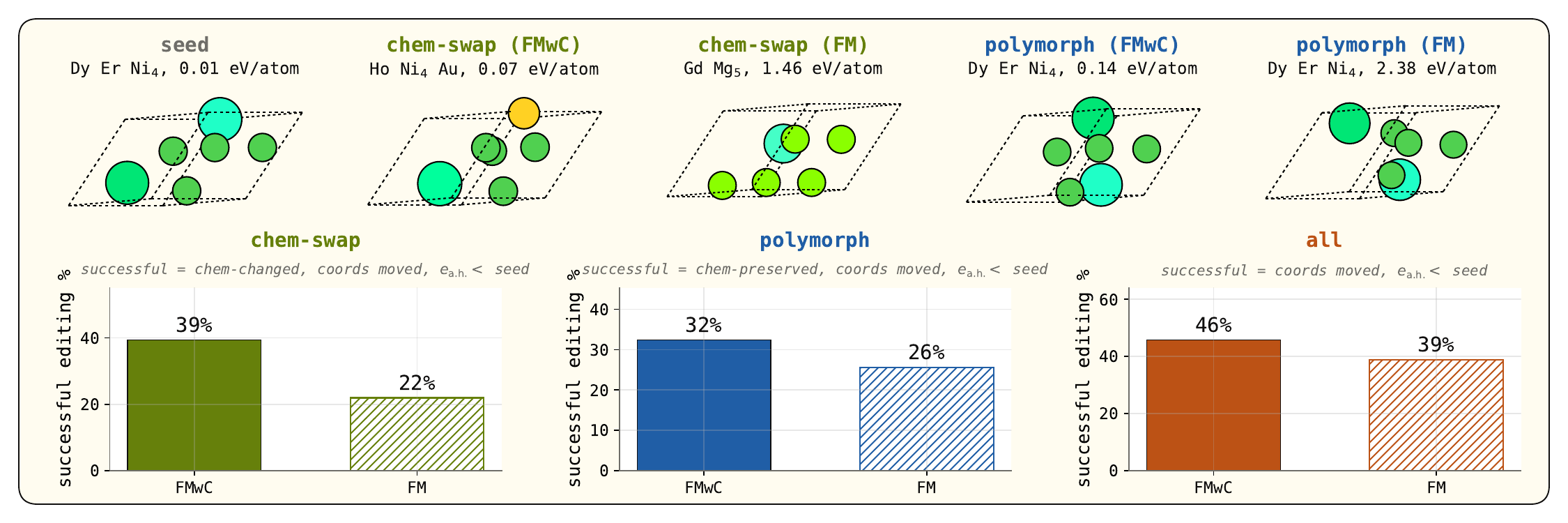}        
    \caption{\textbf{FMwC's per-channel $t^\star$ targets a specific bifurcation; FM at random $t$ misses it.} (a)~Seed Dy Er Ni$_4$ (on the hull) under chem-swap (atoms-only at $t^\star_{\text{atom}}$) and polymorph (coords-only at $t^\star_{\text{coord}}$): FMwC's edits stay near the hull, FM's drift off. (b--d)~Per-mode success rate over 50 seeds $\times$ 20 replicates; FMwC ($t{=}t^\star$, solid) vs FM (uniform random $t$, hatched), at matched channel mask and $\sigma$. Success requires the per-mode chemistry intent (chem-changed for chem-swap, chem-preserved for polymorph, either for all-channels), coord-RMSD $>0.10$, and $e_{\text{a.h.}}$ below the seed's.}
    \label{fig:editing}
\end{figure}

\paragraph{FMwC enables controlled editing of crystals without retraining.}
The variance trajectory exposes \emph{when} the model commits to a structure. To edit a generated sample we (i)~pick a perturbation timestep $t^\star$ and (ii)~inject Gaussian noise into the channel being edited at $t^\star$, then re-integrate the deterministic flow to $t{=}1$ with all channels free to update. The channel is one of atoms, coordinates, or the full latent; FMwC picks $t^\star$ as the argmax of $\sigma_t^2$ in that same channel, while FM picks $t^\star$ uniformly at random. Figure~\ref{fig:editing} reports successful editing rate on FlowMM under three modes that pair an injection channel with a success criterion: \emph{chem-swap} injects atom-channel noise and counts a success when the chemistry changes; \emph{polymorph} injects coord-channel noise and counts a success when the chemistry is preserved; \emph{all-channels} injects into the full latent and counts a success regardless of chemistry. All three additionally require the structure to have moved (periodic boundary aware coord-RMSD $>0.10$) and $e_{\text{a.h.}}$ to fall below the seed's. At matched channel mask and noise budget, FMwC produces $6{-}17$ percentage points more successful edits than FM, with the gap widest on the constraint-respecting modes and narrowest on \emph{all-channels}, the predicted pattern: locating the critical timestep matters most when the edit must respect a constraint. These experiments confirm empirically that the high-$\sigma_t^2$ regions of the trajectory coincide with critical decision points: editing at $t^\star$ reroutes the sample, while editing away from it either dissolves the perturbation (early $t$) or leaves no time to react (late $t$).

\begin{figure}[h!]
  \centering
  \includegraphics[width=\textwidth]{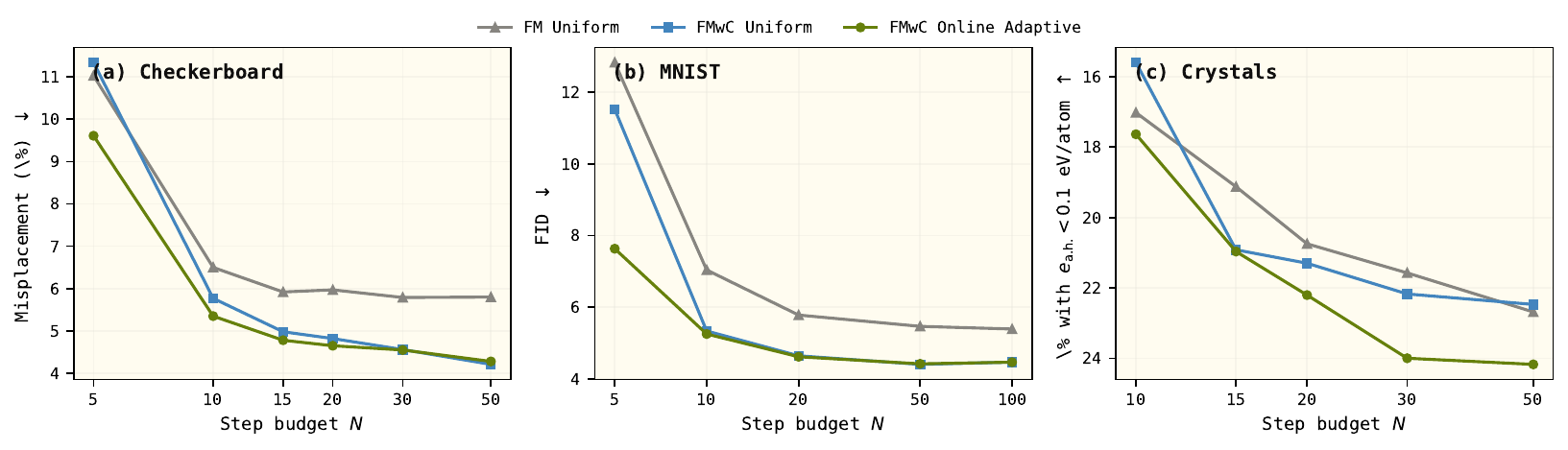}
  \caption{\textbf{A single per-sample adaptive-stepping signal, $\sigma_t^2(t)$, generalises across modalities.} Each panel plots the application's headline quality metric against integrator step budget $N$, oriented so down-and-right is better (Crystal y-axis inverted). Methods: FM Uniform (grey), FMwC Uniform (blue), and FMwC Online Adaptive (green), with the controller chosen per modality.}
  \label{fig:cross-app-adaptive}
\end{figure}

\paragraph{Adaptive stepping reallocates compute to the timesteps where the flow is least decided.}
The variance trajectory suggests an obvious use: allocate integrator steps where the flow is least decided to improve sample quality. We instantiate this with per-sample controllers on $\sigma_t^2$ at a fixed number of forward evaluation (NFE) budget. A naive controller -- an EMA on the trajectory norm of $\sigma_t^2$ -- improves Checkerboard and MNIST but degrades the Riemannian crystal flow, concentrating compute on early-trajectory variance peaks that do not govern crystal quality. We therefore propose FMwC Online Adaptive, a per-component late-only damping rule that allocates extra steps only to the window in which the per-component variance ratio is rising. The full controller comparison is deferred to Appendix~\ref{app:adaptive-controllers}.

Figure~\ref{fig:cross-app-adaptive} shows that Online Adaptive improves over FM Uniform at every step budget across all three modalities. The gains are largest in the low-budget regime where uniform schedules are most wasteful: at $N{=}5$, Online Adaptive reduces Checkerboard misplacement from $11.0\%$ to $9.6\%$ and MNIST FID from $12.7$ to $7.6$. On Crystals, where the geometry is non-Euclidean and the EMA controller fails, Online Adaptive improves post-relaxation Meta\% over FM Uniform at every $N \geq 10$ and matches at $N{=}30$ the quality FM Uniform attains at $N{=}50$.

\begin{figure}[ttb]
  \centering
  \includegraphics[width=\textwidth]{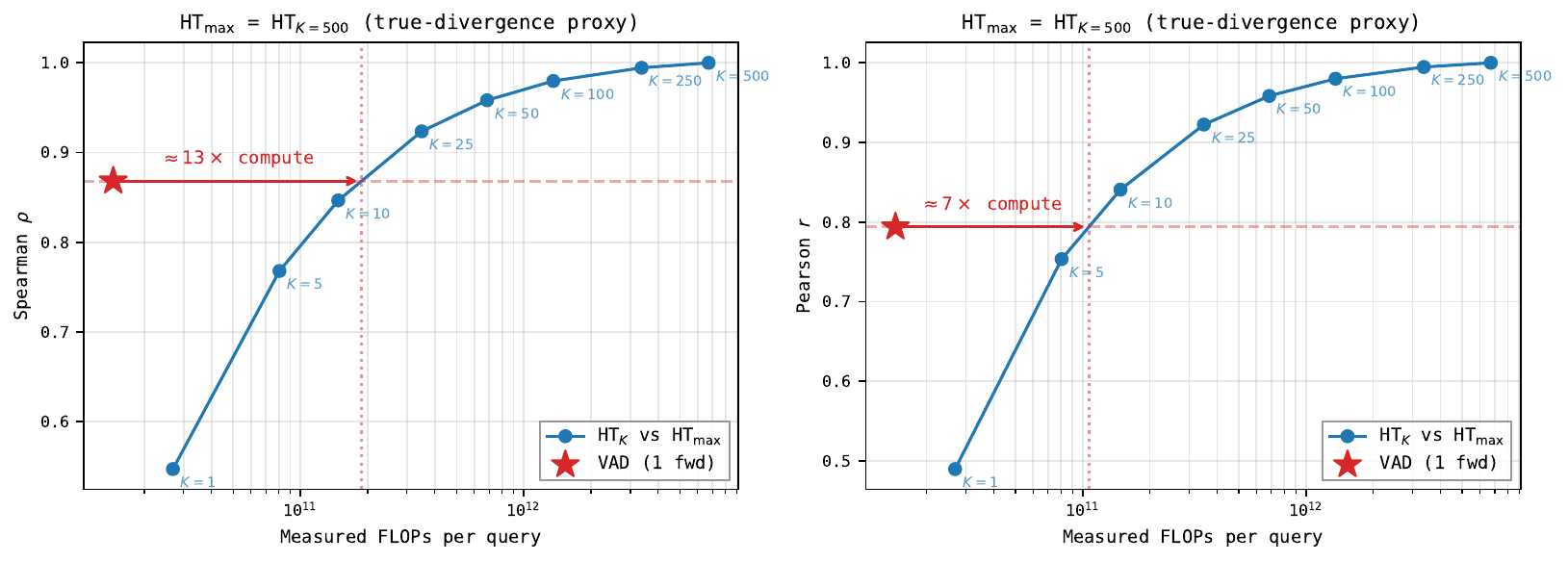}
  \caption{\textbf{On the Crystals model, one forward pass of FMwC matches Hutchinson's divergence.} Spearman  and Pearson correlation against the high-$K$ Hutchinson reference vs.\ measured FLOPs per query, on the FlowMM backbone; sweep over $K \in \{1, \dots, 500\}$.}
  \label{fig:mech-flops}
\end{figure}

\paragraph{$\sigma_t^2$ correlates with the magnitude of the learned-velocity-field divergence on every setting.} Section~\ref{sec:method-intuition} argues that $\sigma_t^2$ should peak at the bifurcation points where the divergence of the learned velocity field also peaks. We test this empirically. Since computing the exact divergence at scale is infeasible, we benchmark against a high-$K$ Hutchinson stochastic-trace estimator of mean divergence and treat it as the reference signal. Our results, summarized in Figure~\ref{fig:mech-flops}, support the prediction in two ways. First, a single forward pass of FMwC matches the rank correlation against the high-$K$ reference that Hutchinson reaches with $K \approx 13$ (Spearman) and $K \approx 7$ (Pearson) probes, the divergence signal is recovered at a fraction of the cost. 

Second, the correspondence is universal and geometry-graded: Table~\ref{tab:universality} reports Pearson and Spearman correlations across all three modalities, in both per-step and integrated regimes. Integrated correlations are uniformly higher, as integration averages out timestep-local noise, and the correlation strengthens monotonically with the geometric richness of the flow, weakest on Checkerboard, stronger on MNIST, strongest on the Riemannian crystal flow.

\begin{table}[h!]
\centering
\caption{\textbf{$\sigma_t^2$ tracks the learned-velocity divergence magnitude $|\nabla\!\cdot\!v_{t,\bm{\theta}}|$ across all three settings.} Pearson $r$ (linear) and Spearman $\rho$ (rank) reported in two regimes: per-step (correlation at each ODE timestep, averaged $\pm$ std) and integrated (correlation between trajectory-summed quantities, one scalar per sample). Single-method by construction: ensembles have no within-model analogue of $\sigma_t^2$.}
\label{tab:universality}
\resizebox{\textwidth}{!}{\input{assets/tables/table_2_overview}}
\end{table}
\label{sec:exp-mech}

%% file: assets/tables/table_quality_summary.tex
\begin{tabular}{lcccccc}
\toprule
& \multicolumn{3}{c}{Quality ($\downarrow$)} & \multicolumn{3}{c}{Inference cost vs FM ($\downarrow$)} \\
\\
Method & Checkerboard mispl \% & MNIST FID & Crystals $\bar{e}_{\text{a.h.}}$ & Checkerboard & MNIST & Crystals \\
\midrule
FM & 5.9 & 5.17 & \textbf{0.21} & $\mathbf{1\times}$ & $\mathbf{1\times}$ & $\mathbf{1\times}$ \\
MC-Dropout ($k$=5) & 7.5 & 4.25 & --- & $5\times$ & $5\times$ & --- \\
Ensemble ($k$=5) & 5.9 & 4.35 & --- & $5\times$ & $5\times$ & --- \\
FMwC & \textbf{4.6} & \textbf{4.22} & \textbf{0.21} & $3.22\times$ & $1.003\times$ & $1.08\times$ \\
\bottomrule
\end{tabular}

%% file: assets/tables/table_5_filter_auprc.tex
\begin{tabular}{lrrrrr}
\toprule
Method & Scoring & Toy AUPRC $\uparrow$ & MNIST AUPRC $\uparrow$ & Crystal AUPRC $\uparrow$ & Traj.\ $\downarrow$ \\
\midrule
FM & random & 0.06 & 0.04 & 0.03 & 1 \\
\rowcolor[HTML]{E1ECEB} MC-Dropout ($k$=5) & endpoint dispersion & \textbf{0.60} & 0.08 & — & 5 \\
\rowcolor[HTML]{E1ECEB} Ensemble ($k$=5) & endpoint dispersion & 0.55 & \textbf{0.11} & — & 5 \\
\rowcolor[HTML]{E1ECEB} FMwC ($k$=5) & endpoint dispersion & 0.58 & 0.04 & \textbf{0.03} & 5 \\
\midrule
\rowcolor[HTML]{E1ECEB} FMwC & endpoint confidence & 0.30 & 0.02 & \textbf{0.04} & 1 \\
\rowcolor[HTML]{E1ECEB} FMwC & integrated confidence & 0.33 & 0.02 & \textbf{0.04} & 1 \\
\rowcolor[HTML]{E1ECEB} \textbf{FMwC} & \textbf{temporal ratio} & \textbf{0.55} & \textbf{0.04} & 0.03 & 1 \\
\midrule
\rowcolor[HTML]{EDEECF} \textbf{FMwC} & \textbf{learned (L1-LR)} & 0.65 & \textbf{0.09} & \textbf{0.06} & 1 \\
\rowcolor[HTML]{EDEECF} \textbf{FMwC} & \textbf{learned (GB)} & \textbf{0.83} & \textbf{0.09} & \textbf{0.06} & 1 \\
\bottomrule
\end{tabular}

%% file: assets/tables/table_2_overview.tex
\begin{tabular}{lrrrr}
\toprule
& \multicolumn{2}{c}{Pearson $r\bigl(\sigma_t^2,\,|\nabla\!\cdot\!v_{t,\bm{\theta}}|\bigr)$ $\uparrow$} & \multicolumn{2}{c}{Spearman $\rho\bigl(\sigma_t^2,\,|\nabla\!\cdot\!v_{t,\bm{\theta}}|\bigr)$ $\uparrow$} \\
\cmidrule(lr){2-3} \cmidrule(lr){4-5}
Application & Per-step (mean $\pm$ std) & Integrated & Per-step (mean $\pm$ std) & Integrated \\
\midrule
Checkerboard & $0.42 \pm 0.26$ & $0.71$ & $0.36 \pm 0.29$ & $0.76$ \\
MNIST & $0.46 \pm 0.18$ & $0.57$ & $0.44 \pm 0.17$ & $0.58$ \\
Crystals & $\mathbf{0.79 \pm 0.09}$ & $\mathbf{0.93}$ & $\mathbf{0.87 \pm 0.04}$ & $\mathbf{0.96}$ \\
\bottomrule
\end{tabular}

%% file: sections/conclusion.tex
\section{Conclusion}
\label{sec:conclusion}
We presented Flow Matching with Confidence (FMwC), which turns flow matching from a generator of samples into a generator of samples paired with a measure of how much to trust them. By learning an input-dependent posterior over the velocity field's weights, FMwC injects noise that regularises training and yields a confidence score at generation time, propagated analytically through the ODE in a single deterministic forward pass at the same cost as standard flow matching. Across 2D density estimation, image generation, and de novo inorganic crystal generation, FMwC preserves baseline sample quality on every modality, confirming that confidence training does not come at the expense of generation fidelity. Beyond the score itself, we show that the variance trajectory unlocks a range of downstream uses at no additional training or inference cost. Filtering by confidence improves sample quality across modalities, the variance peak exposes when the model commits to a structure and enables targeted, constraint-respecting edits without retraining, the variance trajectory drives a per-sample adaptive stepping rule that recovers high quality samples at low step budgets, and it also tracks the magnitude of the learned velocity field's divergence. These results, together with the open questions and limitations discussed in Appendix~\ref{app:limitations}, suggest that FMwC makes confidence-aware generation a practical default for scientific and safety-critical applications, where reliable samples matter as much as plausible ones.

%% file: sections/acknowledgements.tex
\section{Acknowledgements}
We thank Johann Brehmer, Floor Eijkelboom, Fabio Grätz, Pim de Haan, Metod Jazbec, Victor Kawasaki-Borruat, Alessandro De Maria, Govert Verkes for the thoughtful and helpful discussions. We thank the CuspAI team for the supportive research environment and for the engineering platform that this work builds on. This publication is part of the project SIGN with file number VI.Vidi.233.220 of the research programme Vidi, which is (partly) financed by the Dutch Research Council (NWO) under the grant https://doi.org/10.61686/PKQGZ71565.

%% file: sections/appendix.tex
\newpage
\hrule height 4pt
\vskip 0.15in
\vskip -\parskip
\begin{center}
{\Large\sc Flowing with Confidence Appendix \par}
\end{center}
\vskip 0.29in
\vskip -\parskip
\hrule height 1pt
\vskip 0.4in

\startcontents[sections]\vbox{\sc\Large Table of Contents}\vspace{4mm}\hrule height .5pt
\printcontents[sections]{l}{1}{\setcounter{tocdepth}{2}}
\vskip 4mm
\hrule height .5pt
\vskip 10mm
\newpage

\section{Notation and Background}
This section provides a concise review of the relevant background, extends key concepts from prior work, and introduces the notation used throughout the paper.
\subsection{Notation}

We denote by $\boldsymbol{x} \in \mathbb{R}^d$ a data vector or state, with $\boldsymbol{x} \sim p_{\rm data}(\boldsymbol{x})$ drawn from the target empirical data distribution.  

Maps or functions are generally denoted by Latin letters, such as $u$ or $f$, while probabilities and distributions are represented by $p$ or $q$. Neural network parameters are indicated using Greek letters: $\boldsymbol{\theta}, \boldsymbol{\psi}$ for deterministic parameters, and $\boldsymbol{\omega}$ for stochastic parameters. 

Conditional distributions are written explicitly, e.g., $p(\boldsymbol{x} \mid \boldsymbol{y})$ indicating the distribution of $\boldsymbol{x}$ condition on $\boldsymbol{y}$. Learned quantities, time-dependent or indexed quantities, are indicated by subscripts; for instance, $v_{\boldsymbol{\theta}}$ denotes a function $v$ learned with parameters $\boldsymbol{\theta}$, while $v_{t, \boldsymbol{\theta}}$ indicates the function $v_t$ indexed by $t$, with learned parameters $\boldsymbol{\theta}$.  

\subsection{Continuous Transformations for Generative Modeling}\label{appendix:flow_matching}

Generative models based on continuous transformations aim to learn a mapping that transports a simple reference distribution \(p_0\), such as a standard Gaussian, to a complex data distribution \(p_{\rm{data}}\). This is typically achieved by defining a parameterized transformation whose expressiveness is sufficient to capture the structure of the data, while remaining computationally tractable.

A general way to construct such transformations is via \textbf{time-dependent vector fields}. Let  
\begin{equation}
u : \mathbb{R}^d \times [0,1] \rightarrow \mathbb{R}^d
\end{equation} 
be a vector field. This induces a \textbf{flow map}  
\begin{equation}
f: \mathbb{R}^d \times [0,1] \rightarrow \mathbb{R}^d
\end{equation}  
defined as the solution to the \textbf{ordinary differential equation (ODE)}:  
\begin{equation}\label{eqn:push_forward}
\frac{d}{dt} f_t(\boldsymbol{x}_0) = u_t(f_t(\boldsymbol{x}_0)), 
\qquad f_0(\boldsymbol{x}_0) = \boldsymbol{x}_0.
\end{equation}  
Intuitively, \(f_t(\boldsymbol{x}_0)\) tracks the trajectory of a point \(\boldsymbol{x}_0\) as it moves along the vector field from \(t=0\) to \(t=1\). Under suitable smoothness conditions, this flow defines a smooth, invertible mapping that pushes the base distribution \(p_0\) through a family of intermediate distributions \(\{p_t\}_{t \in [0,1]}\), with \(p_1\) approximating the target data distribution.

\subsubsection{Distribution Evolution} 
The evolution of the distributions \(p_t\) along the flow is described by the \textbf{continuity equation}, a partial differential equation (PDE) that relates the vector field to the change in probability density:  
\begin{equation}\label{eqn:continuity}
\frac{\partial p_t(\boldsymbol{x})}{\partial t} + \nabla \cdot \big(p_t(\boldsymbol{x}) u_t(\boldsymbol{x})\big) = 0, 
\qquad t \in [0,1].
\end{equation}  
This PDE guarantees that probability mass is conserved as it flows along the trajectories defined by \(u\). In practice, however, the true transport vector field \(u_t(\boldsymbol{x})\) is unknown and must be approximated. A common approach is to parametrize it using a neural network \(v_{t, \boldsymbol{\theta}}(\boldsymbol{x})\). 

Traditional continuous normalizing flows (CNFs) train this network using a likelihood-based objective, which requires evaluating the \textbf{change-of-variables formula} along the flow:  
\begin{equation}
\log p_1(f_1(\boldsymbol{x}_0)) = \log p_0(\boldsymbol{x}_0) - \int_0^1 \nabla \cdot v_{t,\boldsymbol{\theta}}(f_t(\boldsymbol{x}_0))\, dt,
\end{equation}  
where \(\boldsymbol{x}_1 = f_1(\boldsymbol{x}_0)\). This requires repeatedly solving the ODE during training, which can be computationally expensive and sensitive to numerical errors.

\subsubsection{Flow Matching} Instead of directly modeling densities, Flow Matching~\citep{lipman2022flow} focuses on learning the vector field by matching \textbf{target velocities} at points sampled from the intermediate distributions. Specifically, given initial samples \(\boldsymbol{x}_0 \sim p_0\) and target samples \(\boldsymbol{x}_1 \sim p_1\), one defines a smooth interpolation \(\boldsymbol{x}_t\) for \(t \in [0,1]\). This interpolation specifies a family of time-dependent distributions \(p_t\) and an associated velocity field \(u_t(\boldsymbol{x}_t) = \frac{d \boldsymbol{x}_t}{dt}\) that satisfies the continuity equation.

Even though the true vector field \(u_t(\boldsymbol{x}_t)\) and distributions \(p_t\) are unknown, one can construct a tractable \textbf{per-sample formulation} by defining \textbf{conditional flows} toward individual target datapoints \(\boldsymbol{x}_1\). Concretely, for each \(\boldsymbol{x}_1 \sim p_{\rm data}\), we define a conditional trajectory
\begin{equation}
u_t(\boldsymbol{x} \mid \boldsymbol{x}_1),
\end{equation}
which describes how a sample \(\boldsymbol{x}_0 \sim p_0\) moves toward \(\boldsymbol{x}_1\). The unconditional velocity field can then be expressed as a weighted average over these conditional trajectories:
\begin{equation}
u_t(\boldsymbol{x}) = \int u_t(\boldsymbol{x} \mid \boldsymbol{x}_1) \, 
\frac{p_t(\boldsymbol{x} \mid \boldsymbol{x}_1) \, p_{\rm data}(\boldsymbol{x}_1)}{p_t(\boldsymbol{x})} \, d\boldsymbol{x}_1,
\end{equation}
where \(p_t(\boldsymbol{x} \mid \boldsymbol{x}_1)\) is the conditional distribution of \(\boldsymbol{x}_t\) given the endpoint \(\boldsymbol{x}_1\). 

This formulation allows Flow Matching to define a well-posed regression target for the vector field even without explicit knowledge of the full vector field or intermediate densities. Training then reduces to matching the model velocity \(v_{\boldsymbol{\theta}}(\boldsymbol{x}_t, t)\) to the target conditional velocity along these trajectories, by minimizing:
\begin{equation}\label{eqn:fm_obj}
\mathcal{L}(\boldsymbol{\theta}) 
= \mathbb{E}_{\boldsymbol{x}_1 \sim p_{\rm data}, t \sim \mathcal{U}[0,1],\boldsymbol{x}_t \sim p_t(\boldsymbol{x}_t \mid \boldsymbol{x}_1)} 
\Big[ \big\| v_{t, \boldsymbol{\theta}}(\boldsymbol{x}_t) - u_t(\boldsymbol{x}_t \mid \boldsymbol{x}_1) \big\|^2 \Big].
\end{equation}

\subsection{Bayesian Neural Networks}\label{appendix:bayesian_inference}
Bayesian perspectives have played an important role in the development of deep learning methods, offering a coherent probabilistic framework for representing and reasoning about uncertainty \citep{hinton1993keeping, graves2011practical, blundell2015weight, gal2016dropout}. Instead of treating neural network parameters as fixed, Bayesian approaches consider the weights as random variables drawn from a prior distribution, $\boldsymbol{\omega} \sim p(\boldsymbol{\omega})$. Given a dataset $\mathcal{D} = \{(\boldsymbol{x}_i, \boldsymbol{y}_i)\}_{i=1}^N$ of $N$ independent input–output pairs, the aim is to infer the posterior distribution over the weights, $p(\boldsymbol{\omega} \mid \mathcal{D})$, using Bayes’ theorem \citep{bayes1763lii}. Exact computation of this posterior is generally infeasible for modern deep networks due to the high dimensionality and non-linearities of the parameter space. Consequently, various approximate inference methods have been proposed. 

\subsubsection{Deep Ensembles} Deep ensembles \citep{lakshminarayanan2017simple} approximate the posterior distribution by independently training multiple networks $\boldsymbol{\theta}^{\rm{MAP}}_{i}$ weights given different random initializations. The posterior is then approximated~\citep{wilson2020bayesian} as a Dirac delta $p(\boldsymbol{\omega}\mid\mathcal{D})\approx p_{\rm{ens}}(\boldsymbol{\omega}\mid\mathcal{D})=\sum_{i=1}^N\delta(\boldsymbol{\omega} - \boldsymbol{\theta}^{\rm{MAP}}_{i})$, with $\delta$ the dirac-delta function. While they achieve state-of-the-art performance in many tasks \citep{fort2019deep, tan2023single}, ensembles can be computationally expensive, particularly as the network size or the number of ensemble members grows. Each model in the ensemble requires a full set of parameters, leading to a linear increase in memory usage with the number of networks. This not only increases the storage requirements but also limits the ability to deploy ensembles in memory-constrained environments, such as edge devices or large-scale inference systems. Additionally, training multiple large networks is time-consuming, and inference requires aggregating predictions from all ensemble members, which can significantly slow down real-time applications. As a result, while deep ensembles provide robust uncertainty estimates, their scalability is inherently constrained by both model size and the number of ensemble components.

\subsubsection{Variational-based Methods} Other widely used approaches include Variational Inference-based methods~\citep{gal2016dropout, blundell2015weight, kingma2015variational}, which replace the intractable posterior $p(\boldsymbol{\omega}\mid\mathcal{D})$ with a tractable parametric distribution $q_{\boldsymbol{\psi}}$ that can be optimized efficiently over the parameters $\boldsymbol{\psi}$ to resemble $p(\boldsymbol{\omega}\mid\mathcal{D})$ by minimizing the evidence lower bound (ELBO). Despite the computational efficiency of these methods, they can be notoriously difficult to train, particularly for large-scale networks~\citep{papamarkou2024position}. Challenges include high sensitivity to hyperparameters, difficulties in balancing the trade-off between the likelihood and the KL divergence term in the ELBO~\citep{blundell2015weight}, and issues with poor posterior approximations for complex, high-dimensional weight spaces~\citep{izmailov2021bayesian}. As a result, while variational approaches provide a scalable alternative to ensembles, achieving stable and accurate uncertainty estimates in deep networks remains a non-trivial problem.

\subsubsection{Variational Adaptive Dropout}\label{appendix:vad_intro} \citet{coscia2025barnn} recently proposed Variational Adaptive Dropout (VAD), a Bayesian approach that presents the scalability properties of variational methods, with the accuracy gain typical of ensemble methods.  In VAD, each weight $\boldsymbol{\omega}_i$ in a neural network layer $i$ is modelled with an \emph{input-dependent variational distribution}, typically a Gaussian whose variance scales with the weight magnitude:
\begin{equation}\label{eqn:vad_distribution}
    q_{\boldsymbol{\psi}}(\boldsymbol{\omega}_i \mid \boldsymbol{x}) = \mathcal{N}\bigl(\boldsymbol{\theta}_i, \,\alpha_i(\bm{x})^2 \, \boldsymbol{\theta}_i^2\bigr),
\end{equation}
where $\boldsymbol{\theta}_i$ is the mean parameter, and $\alpha_i(\bm{x})$ is a learned function of the input $\boldsymbol{x}$ controlling the adaptive dropout scale, usually given by an inference network $E_{\boldsymbol{\gamma}}$. Sampling can be reparameterized as
\begin{equation}
    \bm{\omega}_i = \bm{\theta}_i + \alpha_i(\bm{x}) \, |\bm{\theta}_i| \, \bm{\epsilon}_i, \quad \bm{\epsilon}_i \sim \mathcal{N}(0,1),
\end{equation}
which allows low-variance gradient estimates via the \emph{local reparameterization trick}~\citep{kingma2015variational}. For a linear layer with input $\boldsymbol{x}$, the resulting activation $z_j$ is distributed as
\begin{equation}\label{eqn:local-reparam}
    z_j \sim \mathcal{N}\Biggl(\sum_i x_i \theta_{ij}, \sum_i x_i^2 \alpha_i(x)^2 \theta_{ij}^2 \Biggr),
\end{equation}
reducing the cost of sampling from per-weight to per-activation. 
Finally, adopting a zero-mean Gaussian prior with hyperparameter $p\in[0,1]$:
\begin{equation}
    \pi(\bm{\omega}) = \mathcal{N}\left(\bm{0},\, \frac{p}{1-p} \bm{\theta}^2\right),
\end{equation}
yields a closed-form KL divergence that depends only on the VAD coefficients:
\begin{equation}\label{eqn:kl_analytical}
    {\rm{KL}}\big[q_{\bm{\phi}}(\bm{\omega} \mid \bm{x})\,\|\, \pi(\bm{\omega})\big] = {\rm{KL}}(\{\alpha_i\}_i)=\sum_{i=1}^L\frac{(\alpha_i+1)(1-p)}{p}+\log\left({\frac{p}{1-p}}\right)-\log(\alpha_i) - 1.
\end{equation}

This approach scales efficiently because the local reparameterization trick avoids sampling each weight individually, and the adaptive dropout coefficients $\alpha_i(x)$ can be predicted by a small auxiliary network $E_{\boldsymbol{\gamma}}$, enabling input-dependent uncertainty without significantly increasing computational cost.

Variational Adaptive Dropout has been successfully applied to a variety of domains, including autoregressive and recurrent neural networks for molecule generation and PDE solving~\cite{coscia2025barnn}, as well as graph neural networks for machine-learned interatomic potentials (MLIPs)~\cite{coscia2025blips}. In this work, we employ Variational Adaptive Dropout extensively to parametrize the flow-matching vector field, enabling an efficient Bayesian flow-matching model.

\section{Theoretical Results}\label{sec:vi_vector_fields}  
We begin by introducing a prior $\pi(v_t)$ over vector fields $v_t:[0,1]\times\mathbb{R}^d \to \mathbb{R}^d$. Given $\mathcal{D} = \{p_0, p_1\}$, our goal is to find the posterior $\pi(v_t \mid \mathcal{D})$.  
To do so, let us introduce a variational approximation $q_{\bm{\psi}}(v_t)$, and minimize the Kullback-Leibler divergence between the variational and true posterior:
\begin{equation}
\mathrm{KL}\bigl[q_{\bm{\psi}}(v_t)\,\|\,\pi(v_t \mid \mathcal{D})\bigr] = \int\log\frac{dq_{\bm{\psi}}}{d\pi}(v_t)dq_{\bm{\psi}}(v_t),
\end{equation}
where $\frac{dq_{\bm{\psi}}}{d\pi}$ indicates the Radon-Nikodym derivative. Minimizing the divergence above is equivalent to maximizing:
\begin{equation}\label{eqn:elbo_bfm_proposition}
\text{ELBO}(\bm{\psi})=
\mathbb{E}_{v_t \sim q_{\bm{\psi}}} \bigl[ \log \mathcal{L}(\mathcal{D} \mid v_t) \bigr]
- \mathrm{KL}\bigl[q_{\bm{\psi}}(v_t)\,\|\,\pi(v_t)\bigr],
\end{equation}
where $\mathcal{L}(\mathcal{D} \mid v_t)$ is the functional likelihood over probability densities. To build a likelihood which penalizes deviations from a given target vector field that generates the probability path, we write:
\begin{align}
\log \mathcal{L}(\mathcal{D} \mid v_t) 
=- \mathbb{E}_{t \sim U(0,1), \bm{x} \sim p_t(\bm{x})} \Big[ \| u_t(\bm{x}) - v_t(\bm{x}) \|^2 \Big].
\end{align}
Note that the gradient of the log-likelihood is always equivalent to the gradient of the conditional Flow Matching objective~\eqref{eqn:fm_obj}, see ~\citet{lipman2022flow} for reference.

\paragraph{Parameterizing function-space distributions.}  
So far, we have reasoned about a variational posterior over vector fields $v_t$ in an abstract, infinite-dimensional function space. While mathematically elegant, this formulation is not directly implementable: we cannot store or sample arbitrary functions. A natural solution is to \emph{parameterize the vector field using a neural network} $v_{t, \bm{\omega}}(\bm{x})$ with random parameters $\bm{\omega}$. We can then link the variational distribution over the network parameters to the functions directly by:
\begin{equation}\label{eqn:posterior_measure}
q_{\bm{\psi}}(v_t) = \int \delta(v_t - v_{t, \bm{\omega}}) \, q_{\bm{\psi}}(\bm{\omega}) \, d\bm{\omega}.
\end{equation}

Intuitively, this says: to sample a vector field $v_t$, first sample parameters $\bm{\omega} \sim q_{\bm{\psi}}(\bm{\omega})$, then evaluate the corresponding function (network) $v_{t, \bm{\omega}}$ given the random parameters $\bm{\omega}$. Given the parametrization of equation~\eqref{eqn:posterior_measure} and the objective in equation~\eqref{eqn:elbo_bfm_proposition} we obtain the following proposition: 
\begin{proposition}[\textbf{Tractable parameter-space ELBO}]\label{prop:elbo_params}
Let $v_{t,\bm{\omega}}$ be a parametrization of the vector field with parameters $\bm{\omega}$, and let $q_{\bm{\psi}}(\bm{\omega})$ denote a variational posterior over $\bm{\omega}$. Then, given equation~\eqref{eqn:posterior_measure}, the function-space ELBO in equation~\eqref{eqn:elbo_bfm_proposition} reduces to the following tractable parameter-space objective:
\begin{equation}
\text{ELBO}(\bm{\psi}) =
\mathbb{E}_{\bm{\omega} \sim q_{\bm{\psi}}(\bm{\omega})} \bigl[ \log \mathcal{L}(\mathcal{D} \mid \bm{\omega}) \bigr]
- \mathrm{KL}\bigl[q_{\bm{\psi}}(\bm{\omega}) \,\|\, \pi(\bm{\omega}) \bigr],
\end{equation}
where the likelihood evaluates the discrepancy between the true and parametrized vector fields:
\begin{equation}
\log \mathcal{L}(\mathcal{D} \mid \bm{\omega}) 
= - \mathbb{E}_{t \sim U(0,1),\, \bm{x} \sim p_t(\bm{x})} \Big[ \| u_t(\bm{x}) - v_{t,\bm{\omega}}(\bm{x}) \|^2 \Big].
\end{equation}
\end{proposition}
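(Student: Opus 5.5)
The argument splits the function-space ELBO in \eqref{eqn:elbo_bfm_proposition} into its two terms and transports each one from function space to parameter space through the pushforward \eqref{eqn:posterior_measure}. Write $G:\bm{\omega}\mapsto v_{t,\bm{\omega}}$ for the (measurable) parametrization map. Then $q_{\bm{\psi}}(v_t)=G_\#q_{\bm{\psi}}(\bm{\omega})$, and the prior over vector fields is taken as the matching pushforward $\pi(v_t)=G_\#\pi(\bm{\omega})$, as stated in Sec.~\ref{sec:method-fmwc}.

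\emph{Expected log-likelihood term.} This is the change-of-variables formula for pushforward measures. For any measurable functional $F$ on vector fields,
\begin{equation*}
\mathbb{E}_{v_t\sim G_\#q_{\bm{\psi}}}[F(v_t)] = \mathbb{E}_{\bm{\omega}\sim q_{\bm{\psi}}}[F(G(\bm{\omega}))].
\end{equation*}
Formally this amounts to integrating the Dirac delta in \eqref{eqn:posterior_measure} against $F$. Take $F(v_t)=\log\mathcal{L}(\mathcal{D}\mid v_t)= -\mathbb{E}_{t,\bm{x}\sim p_t}\|u_t(\bm{x})-v_t(\bm{x})\|^2$. Substituting $v_t=v_{t,\bm{\omega}}$ gives exactly $\log\mathcal{L}(\mathcal{D}\mid\bm{\omega})$ as stated. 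One should check that $F$ is measurable and quasi-integrable, which holds because $F\le 0$. No Fubini argument is needed, since we never swap the inner expectation over $(t,\bm{x})$ with the one over $\bm{\omega}$.

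\emph{KL term.} This is the main obstacle. A pushforward generally satisfies only the data-processing inequality
\begin{equation*}
\mathrm{KL}[G_\#q\,\|\,G_\#\pi]\le \mathrm{KL}[q\,\|\,\pi],
\end{equation*}
with equality if and only if $\frac{dq}{d\pi}$ is $\sigma(G)$-measurable. This holds for instance when $G$ is injective on the supports, or when $q$ and $\pi$ share the same conditional distribution given $G(\bm{\omega})$. I would handle this in two steps.

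First, I would state the exact identity under an identifiability assumption. If $G$ is injective, then $\frac{dq_{\bm{\psi}}(v_t)}{d\pi(v_t)}(G(\bm{\omega}))=\frac{dq_{\bm{\psi}}(\bm{\omega})}{d\pi(\bm{\omega})}(\bm{\omega})$. Applying the same change of variables to $\log\frac{dq}{d\pi}$ then gives equality of the two KL divergences.

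Second, in the general non-identifiable case, the data-processing inequality shows that the parameter-space objective is a lower bound on the function-space ELBO. It therefore still lower-bounds the evidence, and maximizing it is a valid surrogate. This is the sense in which the proposition says the ELBO ``reduces to'' a tractable objective.

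Combining the two terms yields the displayed parameter-space ELBO. Its KL term is available in closed form for the VAD family via \eqref{eqn:kl_analytical}. Its likelihood gradient coincides with the conditional flow-matching gradient, by the remark following \eqref{eqn:elbo_bfm_proposition}.
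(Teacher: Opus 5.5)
Your proposal is correct and follows the paper's proof essentially step for step. The pushforward change of variables gives equality for the expected log-likelihood term; for the KL term, the data-processing inequality under the pushforward prior gives equality when the parametrization is injective and otherwise makes the parameter-space objective a lower bound on the function-space ELBO, which is exactly what the paper establishes (your sharper equality criterion, $\frac{dq}{d\pi}$ being $\sigma(G)$-measurable, and the quasi-integrability remark are small refinements of that same argument).
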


\begin{proof}
Let $\mathcal{V}$ denote the function space of admissible vector fields. By definition, the variational distribution over vector fields is expressed as
\begin{equation}
q_{\bm{\psi}}(v_t) = \int \delta\bigl(v_t - v_{t,\bm{\omega}}\bigr)\, q_{\bm{\psi}}(\bm{\omega}) \, d\bm{\omega},
\end{equation}
where $\delta$ is the Dirac delta functional in the function space $\mathcal{V}$, and $\bm{\omega}\in\Omega$ parameter space.  Let $T:\Omega\to\mathcal{V}$ be the measurable map defined by
\begin{equation}
T(\bm{\omega}) := v_{t,\bm{\omega}} .
\end{equation}
By construction, the variational distribution over vector fields is the push-forward measure
\begin{equation}
q_{\bm{\psi}}(v_t) = T_{\#} q_{\bm{\psi}}(\bm{\omega}).
\end{equation}

Consider first the expected log-likelihood term. By the definition of the push-forward measure:
\begin{equation}
\int_{\mathcal{V}} \log \mathcal{L}(\mathcal{D}\mid v_t)\, dq_{\bm{\psi}}(v_t)
= \int_{\Omega} \log \mathcal{L}(\mathcal{D}\mid T(\bm{\omega}))\, dq_{\bm{\psi}}(\bm{\omega})
= \mathbb{E}_{\bm{\omega}\sim q_{\bm{\psi}}(\bm{\omega})}
\bigl[\log \mathcal{L}(\mathcal{D}\mid \bm{\omega})\bigr].
\end{equation}

Next, assume that the prior over vector fields $\pi(v_t)$ is also induced by a prior $\pi(\bm{\omega})$ on parameters through the same mapping $T$, i.e.\ $\pi(v_t)=T_{\#}\pi(\bm{\omega})$. Since $T$ is deterministic and measurable, the data-processing inequality for the Kullback--Leibler divergence~\citep{cover1991network} under pushforward gives
\begin{equation}
\mathrm{KL}\bigl[q_{\bm{\psi}}(v_t)\,\|\,\pi(v_t)\bigr]
=
\mathrm{KL}\bigl[T_{\#}q_{\bm{\psi}}(\bm{\omega})\,\|\,T_{\#}\pi(\bm{\omega})\bigr]
\;\leq\;
\mathrm{KL}\bigl[q_{\bm{\psi}}(\bm{\omega})\,\|\,\pi(\bm{\omega})\bigr],
\end{equation}
with equality when $T$ is injective. Optimizing the parameter-space objective therefore maximizes a valid lower bound on the functional ELBO, which is tight under injectivity of the parameterization and otherwise tight up to the usual weight-space symmetries (permutations and scalings) of $v_{t,\bm{\omega}}$.

Combining the two terms, the functional-space ELBO is lower-bounded by
\begin{equation}
\text{ELBO}(\bm{\psi})
\;\geq\;
\mathbb{E}_{\bm{\omega}\sim q_{\bm{\psi}}(\bm{\omega})}
\bigl[ \log \mathcal{L}(\mathcal{D} \mid \bm{\omega}) \bigr]
-
\mathrm{KL}\bigl[q_{\bm{\psi}}(\bm{\omega}) \,\|\, \pi(\bm{\omega})\bigr],
\end{equation}
which is the tractable parameter-space objective we optimise.
\end{proof}

\begin{observation}[\textbf{Bayesian model averaging preserves valid flows}]\label{prop:bayesian_model_average_vectorfields}
Let $q_{\bm{\psi}}(v_t)$ be a posterior distribution over vector fields $v_t$ such that each sample $v_t$ satisfies the continuity equation
\begin{equation}
\partial_t p_t + \nabla \cdot \big(p_t v_t \big) = 0.
\end{equation}
Then, the posterior mean vector field
\begin{equation}
\bar{v}_t(\bm{x}) := \mathbb{E}_{v_t \sim q_{\bm{\psi}}}[v_t(\bm{x})]
\end{equation}
also satisfies the continuity equation, i.e.
\begin{equation}
\partial_t p_t + \nabla \cdot \big(p_t \bar{v}_t \big) = 0.
\end{equation}
\end{observation}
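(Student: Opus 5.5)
The proof is by linearity of the continuity equation in the velocity field, for a fixed density path $p_t$. Every sample $v_t\sim q_{\bm{\psi}}$ satisfies the same equation with the \emph{same} $p_t$. Rearranged, this reads $\nabla\cdot(p_t v_t) = -\partial_t p_t$ for $q_{\bm{\psi}}$-almost every $v_t$. The right-hand side does not depend on which sample was drawn. I would take the expectation of both sides over $v_t\sim q_{\bm{\psi}}$. The right-hand side stays $-\partial_t p_t$, since the expectation of a constant under a probability measure is that constant. On the left I would move the expectation inside the divergence and the multiplication by $p_t$, which gives $\nabla\cdot\bigl(p_t\,\mathbb{E}_{q_{\bm{\psi}}}[v_t]\bigr)=\nabla\cdot(p_t\bar v_t)$. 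Adding $\partial_t p_t$ back yields the claim.

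The key steps, in order, are as follows. (i) Fix the path $p_t$ and note that the hypothesis is the identity above, holding pointwise in $(t,\bm{x})$ for almost every sample. (ii) Write the divergence as $\sum_i \partial_{x_i}(p_t v_t^i)$ and interchange $\mathbb{E}_{q_{\bm{\psi}}}$ with each $\partial_{x_i}$. (iii) Use that $p_t(\bm{x})$ is deterministic, so $\mathbb{E}[p_t v_t]=p_t\bar v_t$. (iv) Conclude.

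The only genuine obstacle is step (ii), where the derivative and the expectation are interchanged. There are two routes. The first is a dominated-convergence argument: assume that $p_t v_t$ is $C^1$ in $\bm{x}$ and that $\sup_{\bm{x}\in K}|\partial_{x_i}(p_t v_t^i)|$ is $q_{\bm{\psi}}$-integrable on compacts $K$. This holds for the VAD parametrization with smooth activations, since the weights are Gaussian and the network is polynomially bounded in the weights locally. The second, cleaner route is to work with the weak (distributional) form of the continuity equation. Test against $\varphi\in C_c^\infty$; the hypothesis becomes $\int\!\!\int (\partial_t\varphi + \nabla\varphi\cdot v_t)\,p_t\,d\bm{x}\,dt=0$ for each sample. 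Taking the expectation and applying Fubini then only requires $\mathbb{E}\int\!\!\int |\nabla\varphi|\,|v_t|\,p_t<\infty$, which holds if $\bar{|v_t|}$ is locally integrable against $p_t$. I would present the weak-form argument, since it needs only first-moment integrability.

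One caveat is worth recording. In practice each sampled field need not generate exactly $p_t$, so the hypothesis is a modelling assumption, and the observation is purely a linearity statement under it. It also transfers to the parametric mean $\mu_t=\mathbb{E}_{\bm{\omega}\sim q_{\bm{\psi}}}[v_{t,\bm{\omega}}]$ via the pushforward identity used in Proposition~\ref{prop:elbo_params}.
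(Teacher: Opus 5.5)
Your proposal is correct and follows the paper's proof. Integrate the per-sample continuity equation against $q_{\bm{\psi}}$, use that $p_t$ is fixed and deterministic, and move the expectation inside $\nabla\cdot$ to obtain $\partial_t p_t + \nabla\cdot(p_t\bar v_t)=0$; the only difference is that you justify the interchange of expectation and divergence (by dominated convergence, or via the weak form under a first-moment condition), which the paper simply asserts ``by linearity''.
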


\begin{proof}
By assumption, for each $v_t$ in the support of $q_{\bm{\psi}}$, the continuity equation $\partial_t p_t + \nabla\cdot(p_t v_t) = 0$ holds pointwise in $(t,\bm{x})$. Integrating against $q_{\bm{\psi}}$ and exchanging integration with $\partial_t$ and $\nabla\cdot$ by linearity yields
\begin{align}
0 &= \int \big(\partial_t p_t + \nabla\cdot(p_t v_t)\big)\, dq_{\bm{\psi}}(v_t)
= \partial_t p_t + \nabla\cdot\!\left(p_t \int v_t\, dq_{\bm{\psi}}(v_t)\right)
= \partial_t p_t + \nabla\cdot(p_t\, \bar{v}_t).
\end{align}
Bayesian averaging of valid vector fields therefore preserves the validity of the probability flow.
\end{proof}

\begin{observation}[\textbf{Equivalence of Expected Likelihood and Flow Matching Gradients}]\label{obs:equivalence_grads}  
Given a reparameterization of $q_{\bm{\psi}}(\bm{\omega})$ such that $\bm{\omega} = g_{\bm{\psi}}(\bm{\epsilon})$ with $\bm{\epsilon} \sim p(\bm{\epsilon})$, the Bayesian expected likelihood gradient satisfies
\begin{equation}
\begin{aligned}
&-\nabla_{\bm{\psi}} \mathbb{E}_{\bm{\omega}\sim q_{\bm{\psi}}(\bm{\omega})}
\bigl[ \log \mathcal{L}(\mathcal{D} \mid \bm{\omega}) \bigr] \\
&\qquad =
\nabla_{\bm{\psi}} \mathbb{E}_{\bm{\omega}\sim q_{\bm{\psi}}(\bm{\omega}),\, \boldsymbol{x}_1 \sim p_{\rm data},\, t \sim \mathcal{U}[0,1],\, \boldsymbol{x}_t \sim p_t(\boldsymbol{x} \mid \boldsymbol{x}_1)}
\Big[ \big\| v_{t, \bm{\omega}}(\boldsymbol{x}_t) - u_t(\boldsymbol{x}_t \mid \boldsymbol{x}_1) \big\|^2 \Big].
\end{aligned}
\end{equation}
\end{observation}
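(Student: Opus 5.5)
The plan is to reduce the statement to the classical Flow Matching / Conditional Flow Matching gradient equivalence of \citet{lipman2022flow}, applied pointwise in $\bm{\omega}$. The reparameterization lets us move $\nabla_{\bm{\psi}}$ inside the expectation over $\bm{\epsilon}$. The remaining claim then holds for each fixed weight realisation. First, I would write $\mathbb{E}_{\bm{\omega}\sim q_{\bm{\psi}}}[\log\mathcal{L}(\mathcal{D}\mid\bm{\omega})] = \mathbb{E}_{\bm{\epsilon}\sim p(\bm{\epsilon})}[\log\mathcal{L}(\mathcal{D}\mid g_{\bm{\psi}}(\bm{\epsilon}))]$. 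Since $p(\bm{\epsilon})$ does not depend on $\bm{\psi}$, under standard regularity conditions (dominated convergence, $v_{t,\bm{\omega}}$ differentiable in $\bm{\omega}$ with integrable derivatives, $g_{\bm{\psi}}$ differentiable in $\bm{\psi}$), the gradient commutes with the outer expectation. The same manipulation applies to the right-hand side, with the joint expectation over $(\bm{\epsilon},\bm{x}_1,t,\bm{x}_t)$ and Fubini used to put $\bm{\epsilon}$ outermost.

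Second, for fixed $\bm{\omega}=g_{\bm{\psi}}(\bm{\epsilon})$, I would expand both squared norms. On the marginal side,
\begin{equation*}
\|u_t(\bm{x})-v_{t,\bm{\omega}}(\bm{x})\|^2=\|u_t(\bm{x})\|^2-2\langle u_t(\bm{x}),v_{t,\bm{\omega}}(\bm{x})\rangle+\|v_{t,\bm{\omega}}(\bm{x})\|^2 .
\end{equation*}
The conditional side has the same form with $u_t(\bm{x}_t\mid\bm{x}_1)$ in place of $u_t(\bm{x})$. The first terms do not depend on $\bm{\omega}$, hence not on $\bm{\psi}$, so their gradients vanish. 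The quadratic terms agree because $p_t(\bm{x})=\int p_t(\bm{x}\mid\bm{x}_1)p_{\rm data}(\bm{x}_1)\,d\bm{x}_1$, so $\mathbb{E}_{\bm{x}\sim p_t}$ equals $\mathbb{E}_{\bm{x}_1,\bm{x}_t\sim p_t(\cdot\mid\bm{x}_1)}$ for any function of $\bm{x}$ alone. For the cross terms, I would insert the marginal-velocity formula from App.~\ref{appendix:flow_matching}. This gives $\mathbb{E}_{p_t(\bm{x})}\langle u_t(\bm{x}),v\rangle=\iint\langle u_t(\bm{x}\mid\bm{x}_1),v\rangle p_t(\bm{x}\mid\bm{x}_1)p_{\rm data}(\bm{x}_1)\,d\bm{x}_1 d\bm{x}$, because the $p_t(\bm{x})$ factors cancel, which requires $p_t>0$. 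Hence the two losses differ, for each $\bm{\omega}$, by a constant independent of $\bm{\psi}$.

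Third, taking expectation over $\bm{\epsilon}$ and differentiating yields the stated identity. The sign matches because $\log\mathcal{L}$ is minus the marginal loss. I would remark that the identity concerns only the likelihood term. The KL term is added separately and does not interfere.

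The main obstacle is justifying the interchanges rather than the algebra. These are differentiation under the integral in $\bm{\psi}$, Fubini between $\bm{\epsilon}$ and the data/time variables, and the cross-term identity, which requires $p_t(\bm{x})>0$ and finite second moments of $u_t$ and $v_{t,\bm{\omega}}$ uniformly over a neighbourhood of $\bm{\psi}$. One subtlety is that in VAD the posterior depends on $(\bm{x},t)$, so $g_{\bm{\psi}}$ takes the form $g_{\bm{\psi}}(\bm{\epsilon};\bm{x},t)$. In that case I would argue the equivalence conditionally on $(\bm{x}_t,t)$. The cross-term cancellation still works pointwise in $\bm{x}$ because $v$ depends only on $\bm{x}_t$, $t$ and $\bm{\epsilon}$, not on $\bm{x}_1$. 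I would adopt the integrability assumptions of \citet{lipman2022flow} explicitly, plus a dominating bound on $\nabla_{\bm{\psi}} g_{\bm{\psi}}$.
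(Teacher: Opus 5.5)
Your proposal is correct and follows the paper's proof. The paper likewise uses $\bm{\omega}=g_{\bm{\psi}}(\bm{\epsilon})$ to pass $\nabla_{\bm{\psi}}$ inside $\mathbb{E}_{\bm{\epsilon}}$, and then invokes the deterministic FM/CFM gradient equivalence of \citet{lipman2022flow} for each fixed weight realisation; the only differences are that you re-derive that equivalence inline (the two losses differ by a $\bm{\psi}$-independent constant) and make explicit what the paper leaves implicit, namely the regularity conditions and the requirement that the velocity depend on $(\bm{x}_t,t,\bm{\epsilon})$ but not on $\bm{x}_1$.
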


\begin{proof}
We proceed in two steps:

\textbf{Step 1: Deterministic equivalence.}  
For a deterministic parameter $ \bm{\theta}$, ~\citet{lipman2022flow} showed:
\begin{equation}
\begin{aligned}
&\nabla_{\bm{\theta}} \mathbb{E}_{t \sim U(0,1),\, \bm{x} \sim p_t(\bm{x})} \big[ \| u_t(\bm{x}) - v_{t, \bm{\theta}}(\bm{x}) \|^2 \big] \\
&\qquad =
\nabla_{\bm{\theta}} \mathbb{E}_{\bm{x}_1 \sim p_{\rm data},\, t \sim U(0,1),\, \bm{x}_t \sim p_t(\bm{x} \mid \bm{x}_1)} \big[ \| v_{t, \bm{\theta}}(\bm{x}_t) - u_t(\bm{x}_t \mid \bm{x}_1) \|^2 \big].
\end{aligned}
\end{equation}
This establishes equivalence in the deterministic case.

\textbf{Step 2: Bayesian lifting via reparameterization.}  
Suppose $\bm{\omega} \sim q_{\bm{\psi}}(\bm{\omega})$ admits a reparameterization $\bm{\omega} = g_{\bm{\psi}}(\bm{\epsilon})$ with $\bm{\epsilon} \sim p(\bm{\epsilon})$. Then, for any function $f(\bm{\omega})$, the reparametrization trick~\citep{kingma2014auto} holds:
\begin{equation}
\mathbb{E}_{\bm{\omega} \sim q_{\bm{\psi}}(\bm{\omega})}[f(\bm{\omega})] = \mathbb{E}_{\bm{\epsilon} \sim p(\bm{\epsilon})}[f(g_{\bm{\psi}}(\bm{\epsilon}))],
\qquad
\nabla_{\bm{\psi}} \mathbb{E}_{\bm{\omega} \sim q_{\bm{\psi}}(\bm{\omega})}[f(\bm{\omega})] = \mathbb{E}_{\bm{\epsilon} \sim p(\bm{\epsilon})}[\nabla_{\bm{\psi}} f(g_{\bm{\psi}}(\bm{\epsilon}))].
\end{equation}

Applying this to the deterministic equivalence in Step 1, we obtain
\begin{equation}
\begin{split}
\nabla_{\bm{\psi}} \mathbb{E}_{\bm{\omega} \sim q_{\bm{\psi}}(\bm{\omega})} \mathbb{E}_{t, \bm{x}} \big[ \| u_t(\bm{x}) - v_{t,\bm{\omega}}(\bm{x}) \|^2 \big] 
&= \mathbb{E}_{\bm{\epsilon} \sim p(\bm{\epsilon})} \nabla_{\bm{\psi}} \mathbb{E}_{t, \bm{x}} \big[ \| u_t(\bm{x}) - v_{t,g_{\bm{\psi}}(\bm{\epsilon})}(\bm{x}) \|^2 \big] \\
&= \mathbb{E}_{\bm{\epsilon} \sim p(\bm{\epsilon})} \nabla_{\bm{\psi}} \mathbb{E}_{\bm{x}_1, t, \bm{x}_t \mid \bm{x}_1} \big[ \| v_{t,g_{\bm{\psi}}(\bm{\epsilon})}(\bm{x}_t) - u_t(\bm{x}_t \mid \bm{x}_1) \|^2 \big] \\
&= \nabla_{\bm{\psi}} \mathbb{E}_{\bm{\omega} \sim q_{\bm{\psi}}(\bm{\omega}), \bm{x}_1, t, \bm{x}_t} \big[ \| v_{t,\bm{\omega}}(\bm{x}_t) - u_t(\bm{x}_t \mid \bm{x}_1) \|^2 \big].
\end{split}
\end{equation}
Identifying the left-hand side with the negative expected log-likelihood gradient under $q_{\bm{\psi}}$, we conclude the desired equivalence.
\end{proof}

\paragraph{Variational Adaptive Dropout Flow Matching Objective.}\label{appendix:vad_reparam_fm}
Building on Observation~\ref{obs:equivalence_grads}, we can instantiate the reparameterization
\(\bm{\omega} = g_{\bm{\psi}}(\bm{\epsilon})\) with Variational Adaptive Dropout. In VAD, each weight $\bm{\omega}_i$ is modeled as an input-dependent Gaussian:
\begin{equation}
q_{\bm{\psi}}(\bm{\omega}_i \mid \bm{x}) = \mathcal{N}\Big(\bm{\theta}_i, \, \alpha_i(\bm{x})^2 \, \bm{\theta}_i^2 \Big),
\end{equation}

where $\bm{\theta}_i$ is the mean parameter and $\alpha_i(\bm{x})$ is predicted by a small auxiliary network $E_{\boldsymbol{\gamma}}(\bm{x})$ that controls the adaptive dropout scale. This admits a simple reparameterization:

\begin{equation}\label{eqn:vad_reparam}
\bm{\omega}_i = g_{\bm{\psi}}(\bm{x}, \bm{\epsilon}_i) = \bm{\theta}_i + \alpha_i(\bm{x}) \, |\bm{\theta}_i| \, \epsilon_i, 
\quad \epsilon_i \sim \mathcal{N}(0,1).
\end{equation}

Substituting the VAD reparameterization into the Bayesian Flow Matching gradient, we obtain the training objective:

\begin{equation}\label{eqn:vad_flow_matching}
\text{ELBO}(\bm{\psi}) =
-\mathbb{E}_{\bm{x}_1, t, \bm{x}_t, \bm{\epsilon}}
\Big[
\big\| v_{t, g_{\bm{\psi}}(\bm{x}_1, \bm{\epsilon})}(\bm{x}_t) - u_t(\bm{x}_t \mid \bm{x}_1) \big\|^2
\Big] - \rm{KL}(\{\alpha_i\}_{i\geq 1}).
\end{equation}
With, $\bm{x}_1 \sim p_{\rm data}, \, t \sim \mathcal{U}[0,1], \, \bm{x}_t \sim p_t(\bm{x} \mid \bm{x}_1), \, \bm{\epsilon} \sim \mathcal{N}(0, I)$.

Here:  

\begin{itemize}
\item \(g_{\bm{\psi}}(\bm{x}_1, \bm{\epsilon})\) implements the VAD reparameterization \eqref{eqn:vad_reparam}, with input-dependent adaptive dropout scales \(\alpha_i(\bm{x}_1)\).  
\item Sampling from \(\bm{\epsilon}\) implements the stochasticity of the variational posterior \(q_{\bm{\psi}}\).  
\item The Kullback Leibler term only depends on the adaptive dropout coefficients, as implemented in~\citet{coscia2025blips}.
\end{itemize}

\section{Experimental Details and Hyperparameters}\label{app:exp_details}
In this section we report datasets, evaluation metrics, and architectural hyperparameters used for the main-text experiments. All models were trained on NVIDIA L4 GPUs (24GB), using the validation set for checkpoint selection and the test set for evaluation.

\subsection{Datasets and Metrics}\label{app:dataset-metrics}

We evaluate on three datasets spanning a 2D synthetic target, an image domain, and a structured scientific domain. Reported numbers are averaged over three random seeds unless stated otherwise.

\paragraph{Checkerboard.}                                                                                                                         
  A 2D synthetic target whose support is the eight black cells of a $4{\times}4$ chessboard tiling of $[-2,2]^2$, with uniform density on the
  support and zero outside. The base distribution is $p_0 = \mathcal{N}(0, I_2)$. We train on $5\!\times\!10^5$ samples from the analytical target  
  and evaluate on $10^4$ generated samples. We report three metrics. \emph{Misplacement} is the fraction of generated samples falling outside the
  eight-cell support, evaluated by direct geometric containment (lower is better; the random baseline is $50\%$). \emph{KL} is the symmetric        
  Kullback--Leibler divergence between a Gaussian-kernel KDE of the generated samples (bandwidth chosen by Silverman's rule) and the analytical
  target on a $200\!\times\!200$ grid (lower is better). \emph{Filtering AUPRC} ($\uparrow$) evaluates confidence-guided retention as a binary
  classification: per-sample confidence is the predictor, and a binary label flagging samples we want to discard is the target;
  we sweep a retention threshold from "keep nothing'' to "keep everything'' and integrate precision against recall on the positive class. AUPRC is
  bounded above by $1$ and below by the positive-class prevalence $\pi$ (the precision of random retention).

  \paragraph{MNIST.}
  We use the standard MNIST training split as the target distribution and the held-out test split for downstream classifier evaluation. Images are
  normalised to $[0,1]$ and treated as $784$-dimensional Euclidean vectors; the base distribution is $p_0 = \mathcal{N}(0, I_{784})$. We train on   
  the $6\!\times\!10^4$ training images and evaluate on $10^4$ generated images per run. We report two metrics. \emph{Classifier-derived FID}
  ($\downarrow$) is the Fr\'echet distance between the penultimate-layer activations of a LeNet-style classifier pretrained on MNIST, evaluated on  
  the generated batch and on a held-out reference batch from the test split. \emph{Filtering AUPRC} ($\uparrow$, defined as in
  \textbf{Checkerboard}) uses \emph{classifier failure} as the positive class: the pretrained classifier's top-1 prediction on the generated image
  disagreeing with the requested class label.

  \paragraph{Crystal.}
  We use the FlowMM~\citep{miller2024flowmm} backbone for de novo inorganic crystal generation, trained on the Materials
  Project~\citep{jain2013materials} subset of stable structures used in the original FlowMM release. The flow operates on the Riemannian product    
  manifold of fractional coordinates ($[0,1]^{3N}$ with periodic identifications), lattice parameters ($\mathbb{R}^6$ via the Niggli-reduced
  parameterisation), and atomic species (categorical, embedded in the simplex). We evaluate $2500$ generated structures per run. Quality is reported
   on \emph{LeMat-GenBench}~\citep{betala2025lemat} after post-relaxation by a UMA~\citep{wood2025family} single-point evaluation on the raw CIF:
  \emph{Meta\%} ($\uparrow$) is the fraction of generated structures with $e_{\text{above hull}} < 0.1$~eV/atom (metastable), \emph{Stable\%}
  ($\uparrow$) the fraction at $e_{\text{above hull}} < 0$~eV/atom, and $\bar{e}_{\text{above hull}}$ ($\downarrow$) the mean energy above the
  convex hull in eV/atom. \emph{Filtering AUPRC} ($\uparrow$, defined under \textbf{Checkerboard}) uses \emph{generation success} as the positive
  class: $e_{\text{above hull}} < 0.1$~eV/atom. Due to the computational complexity only a single seed run was reported.

\subsection{Model Architectures and Hyperparameters}
\label{app:architectures}

\paragraph{Checkerboard.}
The deterministic backbone is an MLP with SiLU activations and a sinusoidal time embedding concatenated to the input at every layer; widths and depths are listed in Tab.~\ref{tab:network-hyperparams}. FMwC attaches an inference network $E_{\bm{\gamma}}$ to every linear layer of the backbone. Each $E_{\bm{\gamma}}$ is a small two-layer MLP that takes the layer's mean activation $\mu_l$ and the time $t$ as input and outputs the per-channel log-dropout scale $\log\alpha_l(\bm{x},t)$, which parameterises the Gaussian posterior in Eq.~\ref{eq:vad}; the inference networks share their architecture across layers but not their weights. The baselines reuse the same backbone: \emph{FM} drops the inference networks entirely, \emph{MC-Dropout} replaces them with standard fixed dropout at rate $p$, and \emph{Ensemble} is $k$ independently trained copies of the FM backbone with different random seeds.

\paragraph{MNIST.}
The MNIST backbone is a UNet with sinusoidal time embedding broadcast to every residual block; the base channel count and class-embedding dimension are listed in Tab.~\ref{tab:network-hyperparams}. Class conditioning, when used, is provided as a learned class-embedding vector added to the time embedding before the residual broadcast. Inference networks $E_{\bm{\gamma}}$ are attached to every convolutional and linear layer of the UNet (downsampling stages, bottleneck, upsampling stages), with the same two-layer-MLP construction used on Checkerboard; each $E_{\bm{\gamma}}$ takes the spatially-pooled mean activation of its layer together with the time embedding as input and emits a per-channel log-dropout scale, so VAD noise is shared across spatial positions within a feature map but varies between channels. Baselines are constructed as in Checkerboard, with $k{=}5$ independently retrained UNet copies for the Ensemble.

\paragraph{Crystal.}
The deterministic backbone is the FlowMM equivariant graph network of \citet{miller2024flowmm}, used unchanged from the original architecture; we refer the reader to that paper for the message-passing structure, the manifold-aware update rules, and the parameter count. FMwC attaches an inference network to every linear layer in the message-passing and readout blocks, with the same two-layer-MLP construction used on the Euclidean applications. The deterministic-FM baseline is the FlowMM checkpoint released with~\citet{miller2024flowmm}, retrained under our infrastructure but with the same hyperparameters. Disagreement-based baselines (MC-Dropout, Ensemble) are not reported on the Crystal column because $k{\times}$ retraining of FlowMM and $k{\times}$ stochastic trajectories per generated structure are both prohibitive at the throughput LeMat-GenBench~\citep{betala2025lemat} evaluates against.

\begin{table}[h]
\centering
\caption{\textbf{Network architecture hyperparameters across the three applications.} FMwC attaches an inference network $E_{\bm{\gamma}}$ to every linear layer of the backbone. MC-Dropout shares the backbone and replaces the inference networks with fixed dropout at rate $p$; Ensemble is $k$ independent copies of the FM backbone. Dashes in the Configuration row mark a backbone used unchanged from~\citet{miller2024flowmm}; dashes in the MC-Dropout and Ensemble rows mark baselines not reported on the Crystal column.}
\label{tab:network-hyperparams}
\small
\begin{tabular}{lccc}
\toprule
                   & Checkerboard & MNIST & Crystal \\
\midrule
Backbone           & MLP & UNet & CSPNet \\
Configuration      & hidden 512, depth 3 & base ch.\ 32, cond.\ dim 128 & hidden 512, time-emb.\ 256, depth 6 \\
Inf.\ net width    & 64 & 64 & 64 \\
Inf.\ net depth    & 2  & 2  & 2  \\
MC-Dropout $p$     & 0.1 & 0.1 & --- \\
Ensemble $k$       & 5 & 5 & --- \\
\bottomrule
\end{tabular}
\end{table}

\begin{table}[h]
\centering
\caption{\textbf{Training hyperparameters across the three applications.} All FMwC runs optimise the parameter-space ELBO of Eq.~\ref{eq:elbo-param} with KL weight $\beta$ scaling the closed-form per-layer KL between the VAD posterior $\mathcal{N}(\bm{\theta}_l,\alpha_l^2\bm{\theta}_l^2)$ and a Gaussian prior at fixed rate $\alpha_p$; the KL term is annealed cyclically with a linear warmup. Dashes mark settings not used on the application.}
\label{tab:training-hyperparams}
\begin{tabular}{lccc}
\toprule
                           & Checkerboard & MNIST & Crystal \\
\midrule
Optimiser                  & Adam & Adam & AdamW \\
Learning rate              & $10^{-3}$ & $10^{-4}$ & $5\!\times\!10^{-4}$ \\
Weight decay               & --- & --- & $5\!\times\!10^{-3}$ \\
Batch size                 & 4096 & 256 & 256 \\
Training length            & 20{,}000 iter.\ & 50{,}000 iter.\ & 2{,}000 epochs \\
LR schedule                & --- & --- & cosine to $10^{-5}$ \\
EMA decay                  & --- & --- & 0.999 \\
KL weight $\beta$          & $10^{-3}$ & $10^{-3}$ & $10^{-3}$ \\
Prior rate $\alpha_p$      & 0.3 & 0.3 & 0.1 \\
KL cycles                  & 4 & 4 & 4 \\
KL warmup fraction         & 0.1 & 0.1 & 0.1 \\
\bottomrule
\end{tabular}
\end{table}

\section{Additional Results}
\label{app:additional-results}

\subsection{Single-Layer vs Multi-Layer FMwC}
\label{app:ablation-quality}
We compare two FMwC parameterisations on the Checkerboard. \emph{Output-only} FMwC injects VAD noise into the final linear layer alone. \emph{Multi-layer} FMwC, the version used in the main text, attaches VAD to every linear layer along the depth. Tab.~\ref{tab:singlevsmulti} reports misplacement (fraction of generated points outside the eight-cell support) and KL between a kernel-density estimate of generated samples and the analytical target. The two parameterisations are within $0.3$ percentage points of each other on misplacement, so output-only FMwC is not a quality regression. The cost of the simplification is on the confidence side: only multi-layer propagation accumulates the variance signal that the temporal-ratio score relies on (Tab.~\ref{tab:singlevsmulti-auprc}).

\begin{table}[h]
\centering
\caption{\textbf{Output-only FMwC matches multi-layer FMwC on Checkerboard quality.} Misplacement and KL on the Checkerboard, both lower-is-better, under MAP integration with $N{=}50$ Euler steps. The AUPRC companion that breaks the tie is Tab.~\ref{tab:singlevsmulti-auprc}.}
\label{tab:singlevsmulti}
\input{assets/tables/table_a1_singlevsmulti}
\end{table}

\subsection{Sampling Strategy Ablation}
\label{app:sampling-strategy}
The variational posterior over weights admits three natural decoders for sample generation. (1) \emph{Stochastic}: draw one weight sample $\bm{\omega}\sim q_{\bm{\psi}}(\bm{\omega}\mid\bm{x},t)$ at each ODE step and integrate that single trajectory. (2) \emph{Mean velocity ($k$=5)}: draw $k$ weight samples per step, average their velocity outputs, and integrate the mean. (3) \emph{MAP}: integrate the deterministic mean field $\mu_t(\bm{x})$ produced by removing stochasticity in the network, which is what Eq.~\ref{eq:layerprop} produces alongside the variance trajectory at no extra cost. Tab.~\ref{tab:sampling} reports Checkerboard misplacement under all three for both FMwC parameterisations. The three perform almost exactly, so we adopt MAP throughout the paper: it is the cheapest of the three, and it is exactly the trajectory the variance recursion is already running on.

\begin{table}[h]
\centering
\caption{\textbf{Stochastic, mean-velocity ($k{=}5$), and MAP decoders are within noise on Checkerboard quality.} Misplacement under each decoder for output-only and multi-layer FMwC at $N{=}50$ Euler steps; MAP is the default elsewhere in the paper.}
\label{tab:sampling}
\input{assets/tables/table_a2_sampling_strategy}
\end{table}

\subsection{Full AUPRC Ablation}
\label{app:auprc-full}
Tab.~\ref{tab:singlevsmulti-auprc} is the unified Checkerboard AUPRC table across every method and every readout we tested. Positive class is misplacement (a generated point falling outside the eight-cell support); the FM row is the positive-class rate, which is the random-retention floor. Two facts read off the table. FMwC's temporal-confidence ratio at one trajectory ($\mathrm{AUPRC}{=}0.55$) sits within $0.05$ of the $k{=}5$ FMwC-MC $0.58$), but only integrating a single trajectory.

\begin{table}[t]
\centering
\caption{\textbf{FMwC's temporal-ratio score reaches the $k{=}5$ between-model AUPRC at one-fifth the trajectories on the Checkerboard.} Positive class is misplacement; FM row is the positive-class rate. Trajectories column is forward integrations per generated sample.}
\label{tab:singlevsmulti-auprc}
\input{assets/tables/table_a3_singlevsmulti_auprc}
\end{table}

\subsection{Propagation Variant Comparison}
\label{app:propagation}
The per-layer recursion in Eq.~\ref{eq:layerprop} requires propagating $(\mu,\sigma^2)$ through pointwise nonlinearities $\phi$, which means evaluating $\mathbb{E}[\phi(z)]$ and $\mathrm{Var}[\phi(z)]$ for $z\sim\mathcal{N}(\mu_l,\sigma_l^2)$. We compare three closures: the first-order Taylor expansion of \citet{wang2013fast}, its second-order extension, and $10$-node Gauss--Hermite quadrature \citep{postels2019sampling}. Tab.~\ref{tab:propagation} reports endpoint AUPRC on the Checkerboard for the three. They are identical to two decimal places, so the choice does not matter at the depths used in this paper; we default to Gauss--Hermite as the most numerically stable of the three under wider activations.

\begin{table}[t]
\centering
\caption{\textbf{First-order Taylor, second-order Taylor, and Gauss--Hermite are indistinguishable at the depths used in this paper.} Endpoint AUPRC on the Checkerboard for the three nonlinearity closures inside the layer-to-layer variance recursion of Eq.~\ref{eq:layerprop}.}
\label{tab:propagation}
\input{assets/tables/table_a4_propagation}
\end{table}

\subsection{Endpoint vs Trajectory-Accumulated Scoring}
\label{app:scoring}
The two readouts $c_{\mathrm{ep}}$ and $c_{\mathrm{tc}}$ defined in Sec.~\ref{sec:method-fmwc} differ in what they aggregate: $c_{\mathrm{ep}}$ is the analytical variance at the integration endpoint $t{=}1$, while $c_{\mathrm{tc}}$ is the per-component late-window ratio of the variance trajectory. Tab.~\ref{tab:endpoint-vs-traj} compares the two on the Checkerboard. Trajectory-integrated scoring lifts AUPRC from $0.30$ to $0.33$ for the analytical readout at one trajectory: confidence has temporal dynamics that endpoint-only scoring averages over. The Monte-Carlo variant at $k{=}5$ reaches $0.58$ at the endpoint, which is the upper bound at this trajectory budget but requires five integrations per sample.

\begin{table}[ttb]
\centering
\caption{\textbf{Trajectory-accumulated scoring lifts FMwC's analytical AUPRC by $+0.03$ at one trajectory.} Endpoint vs.\ trajectory-integrated AUPRC on the Checkerboard for the analytical variance estimators.}
\label{tab:endpoint-vs-traj}
\input{assets/tables/table_a5_endpoint_vs_traj}
\end{table}

\subsection{Adaptive Stepping: Controllers}                                                                                                              
  \label{app:adaptive-controllers}                                                                                                                         
  Both controllers run on top of the FMwC variance trajectory at one forward pass per integrator step on a fixed budget of $N$ NFE per sample. Index
  samples by $j$, steps by $i\in\{0,\dots,N{-}1\}$; write $t_j$ for the current per-sample time at step $i$ and $K_j = N - i$ for the steps remaining. The 
  per-sample scalar trajectory norm of $\sigma_t^2$ is modality-specific: $\sigma_j = \bigl(\sum_d \sigma^2_{t,d}\bigr)^{1/2}$ for Checkerboard (sum over
  the two Euclidean dimensions), $\sigma_j = \tfrac{1}{CHW}\sum_{c,h,w}\sqrt{\sigma^2_{t,c,h,w}}$ for MNIST (mean of the pixel-wise std over channels and  
  pixels), and $\sigma_j = \sigma_{\mathrm{coord},j}+\sigma_{\mathrm{lattice},j}+\sigma_{\mathrm{type},j}$ for the crystal flow (coord and type summed over
   atoms, lattice as the Frobenius norm of its $3{\times}3$ variance matrix). Its EMA at rate $\lambda\in(0,1]$ is $\bar\sigma_j \leftarrow
  (1-\lambda)\,\bar\sigma_j + \lambda\,\sigma_j(t_j)$, initialised to $\sigma_j(0)$.
  
  The \emph{naive EMA controller} turns $(\bar\sigma_j,\sigma_j)$ into a
   per-sample step
  $\Delta t_j \leftarrow \tfrac{1-t_j}{K_j}\bigl(\bar\sigma_j/\sigma_j(t_j)\bigr)^{b}$,
  clipped to $[\Delta t_{\min},\,1-t_j]$ with $\Delta t_{\min} = 10^{-2}/N$ and forced to $1-t_j$ on the last step so every sample lands at $t{=}1$ in     
  exactly $N$ NFE; $b\geq 0$ is the boost factor ($b{=}0$ recovers uniform). Because $\bar\sigma_j$ peaks early on every modality, this rule allocates     
  compute to the early window — well-aligned for the Euclidean toys but not for the crystal flow, where the window that governs sample quality is late.    
    
  \emph{FMwC Online Adaptive}, deployed across all three modalities, replaces the scalar trajectory norm with a per-component decomposition and gates the  
  rule on a late-only window. Splitting $\sigma_t^2$ into per-component variances $\sigma^2_{c,j}(t_j)$ with $c$ ranging over the modality's components
  (Euclidean axes for Checkerboard, image channels for MNIST, $\{\mathrm{coord},\mathrm{lattice}\}$ for the crystal flow), and tracking per-component EMAs 
  $\bar\sigma_{c,j}$ (rate $\lambda$, init $\sigma_{c,j}(0)$), the excess ratio               
  $r_{c,j}(t_j) = \max\!\bigl(\sigma_{c,j}(t_j)/\bar\sigma_{c,j} - 1,\,0\bigr)$
  fires only on components currently above their own running mean. The grid stays uniform $\Delta t = 1/N$; for $t_j \geq t_{\mathrm{late}}$ the velocity  
  slice $v_c$ is damped as $v_c \leftarrow \bigl(1 - \kappa_c\,r_{c,j}(t_j)\bigr)_{+}\,v_c$ with $\kappa_c\geq 0$ a per-component damping gain and         
  $(\cdot)_{+}$ the ReLU clip, and is left unchanged for $t_j < t_{\mathrm{late}}$. On the FlowMM crystal flow this composes with the FlowMM anti-annealing
   factor $1 + s_0\, t$ so the effective per-component slope is $s_{c,j} = s_0\,(1 - \kappa_c\, r_{c,j}(t_j))_{+}$ with $s_0$ the base slope shared with FM
   Uniform. The two ingredients that recover the gain the naive scalar EMA loses on crystals are the late-only gate (compute is not spent on the early peak
   in $\bar\sigma_j$) and the per-component split (damping responds to the component that is actually rising at $t_j$).
   
\subsection{Hutchinson $K$-Sweep at Crystal Scale}
\label{app:hutchinson-sweep}
Tab.~\ref{tab:universality} in the main text and Fig.~\ref{fig:mech-flops} both use a high-$K$ Hutchinson stochastic-trace estimator of $|\nabla\!\cdot\!v_{t,\bm{\theta}}|$ as the divergence reference on the FlowMM crystal backbone~\citep{miller2024flowmm}, since the exact divergence requires $D$ backward passes per step at the model dimension and is not directly accessible at this scale. Tab.~\ref{tab:hutchinson-full} reports the full $K$-sweep, $K\in\{1,5,10,25,50,100,250,500\}$, in two regimes: per-step (correlation evaluated at each ODE timestep, then averaged across timesteps) and integrated (correlation between trajectory-summed quantities, one scalar per sample). The trajectory-integrated correlation is essentially flat in $K$ ($\rho$ rises from $0.947$ at $K{=}1$ to $0.955$ from $K{=}5$ onwards), while the per-step correlation continues to climb and only saturates around $K\approx 100$ ($\rho{=}0.853$, vs.\ $0.500$ at $K{=}1$). This is the empirical basis for the matched-FLOPs claim in Sec.~\ref{sec:exp-mech} that one forward pass of FMwC corresponds to Hutchinson at $K\approx 7$ probes on a per-step basis.

\begin{table}[h]
\centering
\caption{\textbf{Per-step Hutchinson correlation climbs to saturation around $K{\approx}100$; trajectory-integrated correlation is saturated from $K{=}5$.} Spearman $\rho$ and Pearson $r$ between Hutchinson at $K$ probes and the high-$K$ reference on the FlowMM crystal backbone, in per-step and integrated regimes.}
\label{tab:hutchinson-full}
\input{assets/tables/table_a8_hutchinson_full}
\end{table}

\section{Limitations and Future Work}\label{app:limitations}

While FMwC delivers calibrated, actionable confidence scores across synthetic, image, and scientific domains, several limitations remain that we outline below to guide future work.

Our Crystal evaluation is reported on a single seed due to the cost of UMA~\citep{wood2025family} post-relaxation and LeMat-GenBench~\citep{betala2025lemat} throughput; trends are consistent with the multi-seed Checkerboard and MNIST results, but a full sweep would tighten variance estimates. More broadly, our benchmarks span a 2D target, an image domain, and a manifold-valued scientific domain, but do not yet probe text, video, or large-scale molecular systems.

On the theoretical side, while our variance decomposition identifies \emph{when} a sample is uncertain, a formal link between the posterior divergence along a trajectory and the resulting sample-level confidence is still missing. Establishing such a connection would put confidence-guided filtering on firmer footing and clarify which segments of a flow contribute most to epistemic uncertainty.

Our use of confidence is also primarily diagnostic. A natural next step is to study \emph{how} to act on uncertainty at inference time, through adaptive ODE solvers that refine integration where posterior variance is high, guidance schemes steering trajectories away from low-confidence regions, or selective re-sampling that allocates compute to the hardest trajectories.

%% file: assets/tables/table_a1_singlevsmulti.tex
\begin{tabular}{lrr}
\toprule
Method & Mispl.\% $\downarrow$ & KL $\downarrow$ \\
\midrule
Output-FMwC MAP & 4.3\% & 0.088 \\
FMwC MAP & 4.6\% & 0.084 \\
\bottomrule
\end{tabular}

%% file: assets/tables/table_a2_sampling_strategy.tex
\begin{tabular}{lrrr}
\toprule
Method & Stochastic $\downarrow$ & Mean vel ($k$=5) $\downarrow$ & MAP $\downarrow$ \\
\midrule
Output-FMwC & \textbf{4.3\%} & \textbf{4.3\%} & \textbf{4.3\%} \\
FMwC & 4.6\% & 4.6\% & 4.6\% \\
\bottomrule
\end{tabular}

%% file: assets/tables/table_a3_singlevsmulti_auprc.tex
\begin{tabular}{lrrr}
\toprule
Method & Scoring & AUPRC $\uparrow$ & Traj. \\
\midrule
FM & random & 0.06 & 1 \\
FMwC ($k$=5) & MC endpoint spread & 0.58 & 5 \\
FMwC & temporal confidence & 0.55 & 1 \\
FMwC & endpoint var & 0.44 & 1 \\
FMwC & traj-integrated & 0.33 & 1 \\
FMwC & analytical endpoint & 0.30 & 1 \\
\bottomrule
\end{tabular}

%% file: assets/tables/table_a4_propagation.tex
\begin{tabular}{lr}
\toprule
Propagation & Endpoint AUPRC $\uparrow$ \\
\midrule
1st-order & 0.30  \\
2nd-order & 0.30 \\
Gauss-Hermite & 0.30  \\
\bottomrule
\end{tabular}

%% file: assets/tables/table_a5_endpoint_vs_traj.tex
\begin{tabular}{lrr}
\toprule
Method & Endpoint AUPRC $\uparrow$ & Traj-int AUPRC $\uparrow$ \\
\midrule
FMwC & 0.30 & 0.33 \\
\bottomrule
\end{tabular}

%% file: assets/tables/table_a8_hutchinson_full.tex
\begin{tabular}{lrrrr}
\toprule
K & Per-step $\rho$ & Per-step $r$ & Integrated $\rho$ & Integrated $r$ \\
\midrule
1 & 0.500 & 0.410 & 0.947 & 0.919 \\
5 & 0.682 & 0.610 & 0.955 & 0.933 \\
10 & 0.745 & 0.674 & 0.955 & 0.934 \\
25 & 0.807 & 0.733 & 0.955 & 0.934 \\
50 & 0.835 & 0.762 & 0.955 & 0.934 \\
100 & 0.853 & 0.779 & 0.955 & 0.934 \\
250 & 0.864 & 0.790 & 0.955 & 0.934 \\
500 & 0.868 & 0.794 & 0.955 & 0.934 \\
\bottomrule
\end{tabular}

%% file: references.bib
@article{berry2023efficient,
  title={Efficient epistemic uncertainty estimation in regression ensemble models using pairwise-distance estimators},
  author={Berry, Lucas and Meger, David},
  journal={arXiv preprint arXiv:2308.13498},
  year={2023}
}

@article{gat2024discrete,
  title={Discrete flow matching},
  author={Gat, Itai and Remez, Tal and Shaul, Neta and Kreuk, Felix and Chen, Ricky TQ and Synnaeve, Gabriel and Adi, Yossi and Lipman, Yaron},
  journal={Advances in Neural Information Processing Systems},
  volume={37},
  pages={133345--133385},
  year={2024}
}

@article{miller2024flowmm,
  title={Flowmm: Generating materials with riemannian flow matching},
  author={Miller, Benjamin Kurt and Chen, Ricky TQ and Sriram, Anuroop and Wood, Brandon M},
  journal={arXiv preprint arXiv:2406.04713},
  year={2024}
}

@article{chen2024flow,
  title={Flow matching on general geometries},
  author={Chen, Ricky TQ and Lipman, Yaron},
  journal={arXiv preprint arXiv:2302.03660},
  year={2024}
}

@article{nalisnick2019deep,
  title={Do deep generative models know what they don't know?},
  author={Nalisnick, Eric and Matsukawa, Akihiro and Teh, Yee Whye and Gorur, Dilan and Lakshminarayanan, Balaji},
  journal={arXiv preprint arXiv:1810.09136},
  year={2019}
}

@article{lakshminarayanan2017simple,
  title={Simple and scalable predictive uncertainty estimation using deep ensembles},
  author={Lakshminarayanan, Balaji and Pritzel, Alexander and Blundell, Charles},
  journal={Advances in neural information processing systems},
  volume={30},
  year={2017}
}

@inproceedings{gal2016dropout,
  title={Dropout as a {B}ayesian approximation: Representing model uncertainty in deep learning},
  author={Gal, Yarin and Ghahramani, Zoubin},
  booktitle={International conference on machine learning},
  pages={1050--1059},
  year={2016}
}

@article{kingma2015variational,
  title={Variational dropout and the local reparameterization trick},
  author={Kingma, Diederik P and Salimans, Tim and Welling, Max},
  journal={Advances in neural information processing systems},
  volume={28},
  year={2015}
}

@article{molchanov2017variational,
  title={Variational dropout sparsifies deep neural networks},
  author={Molchanov, Dmitry and Ashukha, Arsenii and Vetrov, Dmitry},
  journal={International Conference on Machine Learning},
  year={2017}
}

@article{wang2013fast,
  title={Fast dropout training},
  author={Wang, Sida and Manning, Christopher},
  journal={International conference on machine learning},
  pages={118--126},
  year={2013}
}

@article{postels2019sampling,
  title={Sampling-free epistemic uncertainty estimation using approximated variance propagation},
  author={Postels, Janis and Ferroni, Francesco and Coskun, Huseyin and Navab, Nassir and Tombari, Federico},
  journal={International Conference on Computer Vision},
  year={2019}
}

@article{betala2025lemat,
  title={LeMat-GenBench: A Unified Evaluation Framework for Crystal Generative Models},
  author={Betala, Siddharth and Gleason, Samuel P and Ramlaoui, Ali and Xu, Andy and Channing, Georgia and Levy, Daniel and Fourrier, Cl{\'e}mentine and Kazeev, Nikita and Joshi, Chaitanya K and Kaba, S{\'e}kou-Oumar and others},
  journal={arXiv preprint arXiv:2512.04562},
  year={2025}
}

@article{coscia2025barnn,
  title={BARNN: A Bayesian Autoregressive and Recurrent Neural Network},
  author={Coscia, Dario and Welling, Max and Demo, Nicola and Rozza, Gianluigi},
  journal={arXiv preprint arXiv:2501.18665},
  year={2025}
}

@article{coscia2025blips,
  title={{BLIPs: Bayesian Learned Interatomic Potentials}},
  author={Coscia, Dario and de Haan, Pim and Welling, Max},
  journal={arXiv preprint arXiv:2508.14022},
  year={2025}
}

@incollection{cover1991network,
     title={Network Information Theory},
     author={Cover, Thomas M and Thomas, Joy A},
     booktitle={Elements of Information Theory},
     edition={1st},
     publisher={Wiley},
     pages={374--458},
     year={1991}
}

@article{jazbec2025generative,
  title={Generative Uncertainty in Diffusion Models},
  author={Jazbec, Metod and Wong-Toi, Eliot and Xia, Guoxuan and Zhang, Dan and Nalisnick, Eric and Mandt, Stephan},
  journal={arXiv preprint arXiv:2502.20946},
  year={2025}
}

@article{fort2019deep,
  title={{Deep ensembles: A Loss Landscape Perspective}},
  author={Fort, Stanislav and Hu, Huiyi and Lakshminarayanan, Balaji},
  journal={arXiv preprint arXiv:1912.02757},
  year={2019}
}

@article{tan2023single,
  title={{Single-model uncertainty quantification in neural network potentials does not consistently outperform model ensembles}},
  author={Tan, Aik Rui and Urata, Shingo and Goldman, Samuel and Dietschreit, Johannes CB and G{\'o}mez-Bombarelli, Rafael},
  journal={npj Computational Materials},
  volume={9},
  number={1},
  pages={225},
  year={2023},
  publisher={Nature Publishing Group UK London}
}

@inproceedings{blundell2015weight,
  title={{Weight Uncertainty in Neural Networks}},
  author={Blundell, Charles and Cornebise, Julien and Kavukcuoglu, Koray and Wierstra, Daan},
  booktitle={International conference on machine learning},
  pages={1613--1622},
  year={2015},
  organization={PMLR}
}

@Article{graves2011practical,
  title={{Practical Variational Inference for Neural Networks}},
  author={Graves, Alex},
  journal={Advances in Neural Information Processing Systems},
  volume={24},
  year={2011}
}

@article{ambrogioni2024thermodynamics,
  title={The statistical thermodynamics of generative diffusion models: Phase transitions, symmetry breaking and critical instability},
  author={Ambrogioni, Luca},
  journal={arXiv preprint arXiv:2310.17467},
  year={2024}
}

@article{stancevic2026information,
  title={The Information Dynamics of Generative Diffusion},
  author={Stan{\v{c}}evi{\'c}, Dejan and Ambrogioni, Luca},
  journal={Entropy},
  volume={28},
  number={2},
  pages={195},
  year={2026},
  publisher={MDPI}
}

@inproceedings{raya2023spontaneous,
  title={Spontaneous symmetry breaking in generative diffusion models},
  author={Raya, Gabriel and Ambrogioni, Luca},
  booktitle={Advances in Neural Information Processing Systems},
  volume={36},
  year={2023}
}

@inproceedings{pooladian2023multisample,
  title={Multisample Flow Matching: Straightening Flows with Minibatch Couplings},
  author={Pooladian, Aram-Alexandre and Ben-Hamu, Heli and Domingo-Enrich, Carles and Amos, Brandon and Lipman, Yaron and Chen, Ricky T. Q.},
  booktitle={International Conference on Machine Learning},
  volume={202},
  pages={28100--28127},
  year={2023}
}

@article{wood2025family,
  title={{UMA: A Family of Universal Models for Atoms}},
  author={Wood, Brandon M and Dzamba, Misko and Fu, Xiang and Gao, Meng and Shuaibi, Muhammed and Barroso-Luque, Luis and Abdelmaqsoud, Kareem and Gharakhanyan, Vahe and Kitchin, John R and Levine, Daniel S and others},
  journal={arXiv preprint arXiv:2506.23971},
  year={2025}
}

@article{bayes1763lii,
  title={{An Essay towards solving a Problem in the Doctrine of Chances. By the late Rev. Mr. Bayes, FRS communicated by Mr. Price, in a letter to John Canton, A.M.F.R.S.}},
  author={Bayes, Thomas},
  journal={Philosophical transactions of the Royal Society of London},
  year={1763},
  publisher={The Royal Society London}
}

@inproceedings{hinton1993keeping,
  title={{Keeping the Neural Networks Simple by Minimizing the Description Length of the Weights}},
  author={Hinton, Geoffrey E and Van Camp, Drew},
  booktitle={Conference on Computational Learning Theory},
  year={1993}
}

@inproceedings{kingma2014auto,
  title={{Auto-Encoding Variational Bayes}},
  author={Kingma, Diederik P and Welling, Max},
  booktitle={International Conference on Learning Representations},
  year={2014}
}

@inproceedings{papamarkou2024position,
  title={{Position: Bayesian Deep Learning is Needed in the Age of Large-Scale AI}},
  author={Papamarkou, Theodore and Skoularidou, Maria and Palla, Konstantina and Aitchison, Laurence and Arbel, Julyan and Dunson, David and Filippone, Maurizio and Fortuin, Vincent and Hennig, Philipp and Hern{\'a}ndez-Lobato, Jos{\'e} Miguel and others},
  booktitle={International Conference on Machine Learning},
  year={2024}
}

@article{wilson2020bayesian,
  title={{Bayesian Deep Learning and a Probabilistic Perspective of Generalization}},
  author={Wilson, Andrew G and Izmailov, Pavel},
  journal={Advances in neural information processing systems},
  volume={33},
  pages={4697--4708},
  year={2020}
}

@inproceedings{izmailov2021bayesian,
  title={{What are Bayesian Neural Network Posteriors Really Like?}},
  author={Izmailov, Pavel and Vikram, Sharad and Hoffman, Matthew D and Wilson, Andrew Gordon Gordon},
  booktitle={International conference on machine learning},
  pages={4629--4640},
  year={2021},
  organization={PMLR}
}

@article{eijkelboom2024variational,
  title={Variational flow matching for graph generation},
  author={Eijkelboom, Floor and Bartosh, Grigory and Andersson Naesseth, Christian and Welling, Max and van de Meent, Jan-Willem},
  journal={Advances in Neural Information Processing Systems},
  volume={37},
  pages={11735--11764},
  year={2024}
}

@article{jain2013materials,
     title={The Materials Project: A materials genome approach to accelerating materials innovation},
     author={Jain, Anubhav and Ong, Shyue Ping and Hautier, Geoffroy and Chen, Wei and Richards, William Davidson and Dacek, Stephen and Cholia, Shreyas and Gunter, Dan and Skinner, David and Ceder, Gerbrand and Persson, Kristin A},
     journal={APL Materials},
     volume={1},
     number={1},
     pages={011002},
     year={2013},
     publisher={AIP Publishing}
   }

@article{qin2024defog,
  title={Defog: Discrete flow matching for graph generation},
  author={Qin, Yiming and Madeira, Manuel and Thanou, Dorina and Frossard, Pascal},
  journal={arXiv preprint arXiv:2410.04263},
  year={2024}
}

@article{lipman2022flow,
  title={Flow matching for generative modeling},
  author={Lipman, Yaron and Chen, Ricky TQ and Ben-Hamu, Heli and Nickel, Maximilian and Le, Matt},
  journal={arXiv preprint arXiv:2210.02747},
  year={2022}
}

@article{kamkari2024geometric,
  title={A geometric explanation of the likelihood OOD detection paradox},
  author={Kamkari, Hamidreza and Ross, Brendan Leigh and Cresswell, Jesse C and Caterini, Anthony L and Krishnan, Rahul G and Loaiza-Ganem, Gabriel},
  journal={arXiv preprint arXiv:2403.18910},
  year={2024}
}
